\documentclass{article}

\usepackage[preprint]{neurips_2026}

\usepackage[utf8]{inputenc}
\usepackage[T1]{fontenc}
\usepackage{mathrsfs}

\usepackage{hyperref}
\usepackage{url}
\usepackage{booktabs}
\usepackage{amsfonts}
\usepackage{nicefrac}
\usepackage{microtype}
\usepackage{xcolor}

\usepackage{amsmath,amssymb,amsthm}
\usepackage{bm}

\usepackage{graphicx}
\usepackage{subcaption}
\usepackage{float}
\usepackage{placeins}
\usepackage{enumitem}

\usepackage{algorithm}
\usepackage{algpseudocode}

\newtheorem{theorem}{Theorem}

\newtheorem{proposition}{Proposition}

\allowdisplaybreaks
\DeclareMathOperator*{\argmax}{arg\,max}

\title{Variational  Smoothing and Inference for SDEs from Sparse Data with Dynamic Neural Flows}

\author{
Yu Wang\thanks{Equal contribution.} \\
Department of Mathematics\\
Louisiana State University\\
\texttt{yuwang@lsu.edu}
\And
Arnab Ganguly\footnotemark[1] \thanks{Corresponding author.} \\
Department of Mathematics\\
Louisiana State University\\
\texttt{aganguly@lsu.edu}
}

\begin{document}

\maketitle
\begin{abstract}

Stochastic differential equations (SDEs) provide a flexible framework for modeling temporal dynamics in partially observed systems. A central task is to calibrate such models from data, which requires inferring latent trajectories and parameters from sparse, noisy observations. Classical smoothing methods for this problem are often limited by path degeneracy and poor scalability.
In this work, we developed a novel method based on characterization of the posterior SDE in terms of conditional backward-in-time score defined as the gradient of a function solving a Kolmogorov backward equation with multiplicative updates at observation times. We learn this conditional score using  neural networks trained to satisfy both the governing PDE and the observation-induced jump conditions, thereby integrating continuous-time dynamics with discrete Bayesian updates. The resulting score induces a posterior SDE with the same diffusion coefficient but a modified drift, enabling efficient posterior trajectory sampling.
We further derive a likelihood-based objective for learning the SDE parameters, yielding an evidence lower bound (ELBO) for joint state smoothing and parameter estimation. This leads to a variational EM-style procedure, where the neural conditional score is optimized to approximate the smoothing distribution, followed by a maximization step over the SDE parameters using samples from the induced posterior.
Experiments on nonlinear systems demonstrate accurate and stable inference with a very few observations demonstrating significant improved scalability compared to classical MCMC methods.

% To bypass intractable high-dimensional Kolmogorov equations, we approximate this filter using Physics-Informed Neural Networks (PINNs), seamlessly integrating continuous-time PDE constraints with discrete Bayesian observation updates. 
\end{abstract}

\section{Introduction}
Stochastic differential equations (SDEs) provide a flexible framework for modeling dynamical systems evolving under uncertainty. They have also emerged as a cornerstone of generative modeling, where they serve to transform simple noise to complex data distributions \cite{TzRa19-1, TzRa19-2, SoSoKi21, CaTaGa24, MMYG23,MPS25}. In many applications, however, the latent state process is not directly observed; instead, one has access only to sparse and noisy measurements at discrete time points. A central problem is therefore to infer both the unknown parameters governing the dynamics and the distribution of latent trajectories conditioned on these observations, commonly referred to as the \emph{smoothing or posterior distribution}.

Parametric inference for SDEs has been extensively studied (e.g., see \cite{Yos92, Kes97, Chib01, RoSt01, Saha02, Kut03, GoWi05, BPRF06, FePaRo08, Saha08, GoWi08, Iacus08, ChCh11, ArOp11, CsOp13, Li13, BlSo14, WGBS17, Bis22a, GaMiZh25, GaMiZh23}). A typical approach to this problem involves  discretizing the latent or prior SDE in time and performing inference on the resulting high-dimensional latent state vector. Methods based on particle filtering, or Markov chain Monte Carlo (MCMC) are widely used \cite{Chib01, GoWi08, FePaRo08, GoWi11, WGBS17-2}, but often suffer from path degeneracy, poor mixing, and limited scalability, especially when observations are sparse or and the latent dynamics is highly nonlinear. 

In this work, we take a different perspective based on a path-space characterization of the smoothing distribution. We utilize the fact that the smoothing distribution over trajectories can be described as the law of a diffusion process, whose drift is the prior drift modified by a \emph{backward-in-time conditional score}. The latter is defined as the spatial gradient of the logarithm of a message function solving a backward Kolmogorov equation with multiplicative updates at observation times. This yields a continuous-time formulation of Bayesian conditioning at the level of path measures.

Building on this characterization, we develop a scalable method for joint smoothing and parameter estimation. The key idea is to approximate the backward conditional score by a family of neural networks trained to satisfy the governing partial differential equations (PDEs) together with the observation-induced jump conditions. This yields a tractable approximation of the posterior SDE, from which approximate trajectories can be efficiently sampled without resorting to high-dimensional discretizations of the prior process.

We next derive a likelihood-based objective for learning the SDE parameters in the form of an evidence lower bound (ELBO). This leads to a variational Expectation Maximization (EM)-type procedure: for a given iterate of the SDE parameter, the E-step consists of training the neural network to approximate the conditional score and  and using it to define a tractable posterior SDE whose law is the approximate smoothing distribution. Samples from this induced SDE are then used to compute a Monte Carlo approximation of the corresponding ELBO, thereby approximating the intractable conditional expectation required in the EM objective. In the M-step, the next iterate of SDE parameter is obtained by maximizing this Monte Carlo ELBO .

We summarize the main components of our approach as follows:

\begin{itemize}
    \item \textbf{Characterization of smoothing:}  We utilize a representation of the smoothing distribution as the law of a posterior SDE whose drift equals the prior drift plus a correction given by a backward-in-time conditional score.

    \item \textbf{Neural approximation of backward score:} We approximate this  conditional score using neural networks, yielding a tractable posterior SDE that can be efficiently sampled without resorting to MCMC.

    \item \textbf{Variational EM:} We construct a variational EM procedure based on a Monte Carlo approximation of the ELBO using reparameterized samples from the learned posterior SDE, for joint smoothing and parameter estimation.
\end{itemize}

The resulting framework integrates continuous-time stochastic dynamics, partial observation, and variational inference in a novel unified manner. In contrast to classical discretization-based approaches, our method operates directly at the level of the smoothing distribution, leading to improved stability and scalability, particularly in regimes with sparse observations. Empirical results on nonlinear systems demonstrate accurate trajectory reconstruction and parameter estimation, with significant gains over standard MCMC-based methods.

\section{Model and problem setup}

Let $(\Omega,\mathcal{F}, \{\mathcal{F}_t\},\mathbb{P})$ be a filtered probability space supporting a  $d$-dimensional Brownian motion $W$. Our model consists of the the latent SDE:
\begin{equation}\label{eq:signal}
dX(t) = \drft(\kappa, X(t))\,dt + \dffun(X(t))\,dW(t), 
\quad X(0)\sim \initdist,
\end{equation}
where $\kappa \in K_0 \subset \R^{d_0}$ is the  model parameter, and the parameter set $K_0$ is assumed to be compact. We assume that for each $\kappa$, the SDE admits a strong unique solution (e.g., under linear growth and local Lipschitz conditions on the coefficients). We denote a sample path until time $T$ by $X^\kappa_{[0,T]}$.

{\bf Data:} The latent process $X$ is not observed directly. Our data comprises noisy, partial measurements $\by_{1:M_0} = (y_1,\dots,y_{M_0})$ at times $\{t_m\}_{m=1}^{M_0} \subset [0,T]$ --- a realization of $\bm{\obsY}_{t_1:t_{M_0}} = (\obsY(t_1), \obsY(t_2), \hdots, \obsY(t_{M_0}))$. We assume, without loss of generality, $t_{M_0} =T$. For each observation time $t_m$, $\obsY(t_m)$ is an $\R^{d_0}$-valued random variable with $d_0 \leq d$ satisfying 
\begin{align*}
\obsY(t_m) | X(t_m) = x_m \ \sim \ \obsden(\cdot| x_m) .
\end{align*}
A widely used observation model is the linear Gaussian  model:
\begin{align}\label{eq:obs}
	\obsY(t_m) = \omat X(t_m) +\vep_m, \quad \text{ with } \vep_m \stackrel{iid}\sim \No_{d_0}(0, \Sigma_{\mrm{noise}}),
\end{align}
where $G \in \R^{d_0\times d}$ and the covariance matrix of the noise, $\Sigma_{\mrm{noise}} $, is p.d. In this case,  the conditional observation density $\obsden$ is given by $\obsden(\cdot| \procX(t_m) =x_m) = \No_{d_0} (\cdot|\omat x_m, \Sigma_{\mrm{noise}})$.

{\bf Goal:}
Given $\by_{1:M_0}$, our objectives are:
(i) to approximate the smoothing (or posterior) distribution 
$\Post^{(\kappa)}(\cdot \mid \by_{1:M_0})$ defined as the conditional distribution of full latent trajectory $X_{[0,T]}$ given $ \bm{\obsY}_{t_1:t_{M_0}} = \by_{1:M_0}$, and 
(ii) to estimate $\kappa$. 

We emphasize that $\Post^{(\kappa)} (\cdot|\by_{1:M_0})$ and $\Pre^{(\kappa)} \dfeq \mrm{Law}(X_{[0,T]})$ are probability measures on the path space $C([0,T],\R^d)$, rather than a collection of marginal distributions at discrete times; specifically,  for a measurable set $\cal{A} \subset C([0,T],\R^d),$
$$\PP(X_{[0,T]} \in \cal{A}) = \Pre^{(\kappa)}(\cal{A}), \quad \PP(X_{[0,T]} \in \cal{A} \mid \bm{\obsY}_{t_1:t_{M_0}} = \by_{1:M_0}) = \Post^{(\kappa)} (\cal{A}|\by_{1:M_0}).$$
Moreover, $\Post^{(\kappa)}(\cdot \mid \by_{1:M_0})$ is absolutely continuous with respect to $\Pre^{(\kappa)}$ satisfying
\begin{align}
    \label{eq:prior-post-RN}
    \f{d\Post^{(\kappa)} (\cdot \mid \by_{1:M_0}) }{d\Pre^{(\kappa)}} (X_{[0,T]}) = \f{\prod_{m=1}^{M_0} \obsden(y_m \mid X(t_m))}{ e^{\ell(\kappa \mid \by_{1:M_0})}},
\end{align}
where 
\begin{align}
    \label{eq:log-like}
  \ell(\kappa \mid \by_{1:M_0}) = \ln \EE_\kappa\!\left[\prod_{m=1}^{M_0} \obsden(y_m \mid X(t_m))\right].  
\end{align}
is the log-likelihood function of $\kappa$ given the observations. Thus the maximum likelihood estimate (MLE) of $\kappa$ is given by $\hat \kappa_{\mrm{mle}} = \argmax _{\kappa} \ell(\kappa \mid \by_{1:M_0})$.

\section{Methodology}
Estimation of the SDE parameter $\kappa$ and the smoothing distribution 
$\Post^{(\kappa)}(\cdot \mid \by_{1:M_0})$ are intricately tied. The starting point of our approach is the following variational characterization of the log-likelihood. We provide the proof in Appendix~B \eqref{eq:kl-sde}

\begin{theorem}[Variational representation of the log-likelihood]
\label{thm:elbo-lower-bound}
For any $\kappa \in K_0$,
\begin{equation}\label{eq:like-elbo}
\ell(\kappa \mid \by_{1:M_0})
=
\sup_{Q \in \mathcal{P}(C([0,T], \mathbb{R}^d))}
\mathrm{ELBO}(Q, \kappa \mid \by_{1:M_0}),
\end{equation}
where
\begin{align}\label{eq:elbo-clean}
\mathrm{ELBO}(Q, \kappa \mid \by_{1:M_0})
=
\mathbb{E}_Q \!\left[
\sum_{m=1}^{M_0} \ln \obsden(y_m \mid X(t_m))
\right]
-
\mathrm{KL}\big(Q \,\|\, \Pre^{(\kappa)}\big).
\end{align}
Moreover, the supremum is attained at 
$
Q^* = \Post^{(\kappa)}(\cdot \mid \by_{1:M_0}).
$
\end{theorem}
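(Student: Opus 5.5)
The plan is to reduce the statement to the Gibbs variational principle (Donsker--Varadhan) on path space, using the Radon--Nikodym identity \eqref{eq:prior-post-RN}. Write $L(X_{[0,T]}) = \prod_{m=1}^{M_0} \obsden(y_m \mid X(t_m))$, so that $e^{\ell(\kappa\mid \by_{1:M_0})} = \mathbb{E}_{\Pre^{(\kappa)}}[L]$ by \eqref{eq:log-like}. Abbreviate $\Post^{(\kappa)}(\cdot\mid\by_{1:M_0})$ as $\Post$. We assume $0<\mathbb{E}_{\Pre^{(\kappa)}}[L]<\infty$, which is needed for \eqref{eq:prior-post-RN} to make sense.

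\textbf{Step 1: reduce to $Q \ll \Pre^{(\kappa)}$.} If $Q$ is not absolutely continuous with respect to $\Pre^{(\kappa)}$, then $\mathrm{KL}(Q\|\Pre^{(\kappa)})=+\infty$. The expectation term is bounded above whenever $\obsden$ is bounded, as in the Gaussian model \eqref{eq:obs}. In general one adopts the convention that the ELBO equals $-\infty$ in this case. Either way, such $Q$ do not affect the supremum.

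\textbf{Step 2: the key identity for $Q \ll \Pre^{(\kappa)}$ with finite KL.} Since $\Post \sim \Pre^{(\kappa)}$ on $\{L>0\}$, and $Q$ must charge only $\{L>0\}$ for the ELBO to be finite, the chain rule gives
\[
\ln \frac{dQ}{d\Pre^{(\kappa)}} = \ln \frac{dQ}{d\Post} + \ln L - \ell(\kappa\mid\by_{1:M_0}).
\]
Taking $\mathbb{E}_Q$ yields
\[
\mathrm{KL}(Q\|\Pre^{(\kappa)}) = \mathrm{KL}(Q\|\Post) + \mathbb{E}_Q[\ln L] - \ell(\kappa\mid \by_{1:M_0}).
\]
Rearranging,
\[
\mathrm{ELBO}(Q,\kappa\mid\by_{1:M_0}) = \ell(\kappa\mid\by_{1:M_0}) - \mathrm{KL}(Q\|\Post).
\]
Nonnegativity of KL, with equality iff $Q=\Post$, then gives both \eqref{eq:like-elbo} and the fact that the supremum is attained at $Q^*=\Post$.

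\textbf{Step 3: check $Q^*$ directly.} Plugging in $Q^*=\Post$ gives $\mathrm{KL}(\Post\|\Pre^{(\kappa)}) = \mathbb{E}_{\Post}[\ln L] - \ell$. This confirms the ELBO equals $\ell$ there, provided $\mathbb{E}_{\Post}[|\ln L|]<\infty$.

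\textbf{Main obstacle: integrability.} The rearrangement in Step 2 splits $\mathbb{E}_Q$ of a sum into a sum of expectations, which could produce an $\infty-\infty$ expression. In particular, $\mathbb{E}_Q[\ln L]$ may be $-\infty$; for Gaussian $\obsden$, $\ln L$ is bounded above but not below.

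The plan for handling this is as follows. The positive part of $\ln L$ is bounded, so $\mathbb{E}_Q[\ln L]\in[-\infty,C]$ is always well defined. If $\mathrm{KL}(Q\|\Pre^{(\kappa)})<\infty$, the identity $\ln\frac{dQ}{d\Post} = \ln\frac{dQ}{d\Pre^{(\kappa)}} - \ln L + \ell$ shows that $\mathrm{KL}(Q\|\Post)$ is finite exactly when $\mathbb{E}_Q[\ln L]>-\infty$. If $\mathbb{E}_Q[\ln L]=-\infty$, the ELBO is $-\infty$ and the inequality is trivial.

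For general $\obsden$ without an upper bound, one can instead argue via Jensen directly:
\[
\ell = \ln \mathbb{E}_Q\!\left[L\,\frac{d\Pre^{(\kappa)}}{dQ}\right] \ge \mathbb{E}_Q\!\left[\ln L - \ln\frac{dQ}{d\Pre^{(\kappa)}}\right].
\]
This gives the upper bound under only a quasi-integrability assumption, and equality at $\Post$ follows from Step 3.
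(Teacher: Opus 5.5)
Your proof is correct and is essentially the paper's: both rest on the chain-rule decomposition $\mathrm{KL}(Q\,\|\,P) = \mathrm{KL}(Q\,\|\,\Pi) + \mathbb{E}_Q[\ln L] - \ell$ obtained from \eqref{eq:prior-post-RN}, where $P$ and $\Pi$ are the prior and posterior path laws, followed by nonnegativity of KL. Your handling of non-absolutely-continuous $Q$ and of possible $\infty-\infty$ terms is extra rigor the paper leaves implicit; the one slip is in your Jensen fallback, where $\mathbb{E}_Q[L\,dP/dQ] = \mathbb{E}_P[L\,\mathbf{1}\{dQ/dP>0\}] \le e^{\ell}$, so the first relation is $\ell \ge \ln \mathbb{E}_Q[L\,dP/dQ]$ rather than an equality unless $P \ll Q$ on $\{L>0\}$, which still gives the bound in the direction you need.
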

Consequently,
$
\hat{\kappa}_{\mathrm{MLE}}
=
\arg\max_\kappa \sup_Q \mathrm{ELBO}(Q,\kappa),
$
which leads to a variational EM-type procedure consisting of alternating maximization over $Q$ and $\kappa$:
\begin{itemize}%[leftmargin=*]
\item (E-step) Fix $\kappa_i$, and construct an approximation 
$\widetilde Q_i(\cdot \mid \by_{1:M_0})$ to $\Post^{(\kappa_i)}(\cdot \mid \by_{1:M_0})$.\\
Compute ELBO.
\item (M-step) Update
$\dst 
\kappa_{i+1}
=
\arg\max_\kappa \mathrm{ELBO}(\widetilde Q_i, \kappa \mid \by_{1:M_0}).$
% \item (M-step) Update
% $\dst 
% \kappa_{i+1}
% =
% \arg\max_\kappa \mathrm{ELBO}(\widetilde Q_i, \kappa \mid \by_{1:M_0}).
% $
\end{itemize}
The KL term in \eqref{eq:elbo-clean} is finite only when $Q$ is the law of an SDE with the same diffusion coefficient as the prior. In this case, it admits an explicit representation via Girsanov’s theorem (see \eqref{eq:kl-sde}).
This plays a crucial role in our learning procedure. 

\subsection{E-Step (first half): Approximation of smoothing distribution}
{\bf Variational characterization of posterior on path space:} We now describe a method for approximating the smoothing distribution $\Post^{(\kappa)}(\cdot \mid \by_{1:M_0})$ for a given $\kappa$.  
A key theoretical ingredient underlying our method  is the path-space characterization of the smoothing distribution. Specifically, the posterior measure $\Post^{(\kappa)}(\cdot \mid \by_{1:M_0})$ can be characterized as the law of a diffusion process $\bar X$ solving the \emph{posterior SDE}
\begin{align}\label{eq:SDE-post}
d\bar X(t)
= \bar b(\kappa, t, \bar X(t))\,dt
+ \sigma(\bar X(t))\, dW(t),
\qquad a(x) = \sigma(x)\sigma(x)^\top,
\end{align}
where the drift is given by
\begin{align}\label{eq:post-drift}
\bar b(\kappa, t, x)
= b(\kappa, x) + a(x)\nabla_x \ln w(\kappa, x, t).
\end{align}
Here, the function $w(\kappa,x,t)$ represents the conditional likelihood of the future observations $\by_{m+1:M_0}$ given the current state $X(t)=x$, for $t \in (t_m, t_{m+1}]$, and admits the representation
\begin{equation}\label{eq:hrepr}
w(\kappa, x, t)
= p^{(\kappa)}\big(\by_{m+1:M_0} \mid X(t)=x\big)
= \mathbb{E}_\kappa\!\left[
\prod_{k>m} \obsden\big(y_k \mid X(t_k)\big)
\,\middle|\, X(t)=x
\right].
\end{equation}
By \eqref{eq:kol-back}, the function $w$ satisfies a backward Kolmogorov equation with multiplicative updates at observation times. Specifically, for $t \in (t_m,t_{m+1}]$,  $w$ evolves backward in time according to
\begin{equation*}%\label{eq:backwardPDE-post}
\partial_t w(\kappa, x,t) + \cal{A}^{(\kappa)} w(\kappa, x,t) = 0,
\end{equation*}
where the infinitesimal generator $\cal{A}^{(\kappa)}$ of the SDE \eqref{eq:signal} is 
given by
\begin{align*}%\label{eq:gen-X}
	\mathcal{A}^{(\kappa)} f(x) = \sum_{i=1}^d b_i(x; \kappa) \frac{\partial f(x)}{\partial x_i} + \frac12 \sum_{i,j=1}^d a_{ij}(x) \frac{\partial^2 f(x)}{\partial x_i \partial x_j}, \quad f \in C^2(\R^d, \R),
\end{align*}
At each observation time $t_m$, $w$ undergoes a multiplicative update:
\begin{equation}\label{eq:jump-obs}
w(\kappa, x,t_m^-) = \obsden(y_m\mid x)\,w(\kappa, x,t_m^+),
\end{equation}
with terminal condition $w(\kappa,x,T)=\obsden(y_{M_0}\mid x)$.

Since $w$ is not available in closed form, \eqref{eq:SDE-post} and \eqref{eq:post-drift} suggest approximating the smoothing distribution $\Post^{(\kappa)}(\cdot \mid \by_{1:M_0})$ by the law $\widetilde Q$ of a controlled diffusion $\widetilde X$ on $[0,T]$ of the form
\begin{align}\label{eq:SDE-post-approx}
d\widetilde X(t)
=
\widetilde b(\kappa,t,\widetilde X(t))\,dt
+
\sigma(\widetilde X(t))\,dW(t),
\qquad
\widetilde b(\kappa,t,x)
=
b(\kappa,x) + a(x)\widetilde s(\kappa,x,t),
\end{align}

Related ideas have been explored in \cite{SuGa16} and \cite{OpArch09, ArOp11}, where the posterior SDE is approximated by restricting its time marginals to lie within a parametric family (exponential family in \cite{SuGa16} and Gaussian in \cite{OpArch09, ArOp11}). However, constraining only the time marginals does not in general yield an accurate approximation of the smoothing distribution over path space, and further restricting these marginals to simple parametric families significantly limits their expressiveness.

{\bf Approximation of the conditional score:}
In contrast, we obtain $\tilde s(\kappa,\cdot, \cdot)$ by direct approximation of the {\em backward  conditional score}
\[
s(\kappa,x,t) \doteq \nabla_x \ln w(\kappa,x,t),
\]
via a neural parameterization. This leads to an approximation of $\Post^{(\kappa)}$ in the path-space thereby avoiding simpler parametric assumptions only on time marginals.

Note that $w(\kappa,x,t)$ is not a probability density in $x$, but admits a Feynman-Kac representation. Consequently, $s(\kappa,x,t)$ is not a Fisher score in the classical sense, but simply the spatial gradient of the log of this backward message function.

Note that accurate approximation of the conditional score $s(\kappa,\cdot,\cdot)$ requires accuracy of approximation of the backward message function $w(\kappa, \cdot, \cdot)$ in log-scale.  We thus directly approximate its logarithm: $h(\kappa, x, t) \dfeq \ln w(\kappa, x, t)$. The following result derives the explicit PDE for $h(\kappa, \cdot,\cdot)$. Refer to Appendix~B \eqref{eq:kl-sde} for the proof.

\begin{proposition}
    \label{prop: log-w-pde}
 For any $\kappa \in K_0$, $h(\kappa, \cdot, \cdot)$ satisfies the backward PDE between observation times $(t_{m-1}, t_{m}]$ 
 \begin{equation} \label{eq:backwardPDE-log-post}
\partial_t h(\kappa, x, t) 
+ \mathcal{A}^{(\kappa)} h(\kappa, x, t) 
+ \frac{1}{2} \nabla_x^\top h(\kappa, x, t)\, a(x)\, \nabla_x h(\kappa, x, t) 
= 0,
\end{equation}
with the boundary condition at $t_m$ given by 
\begin{equation} \label{eq:jump-obs-lnw}
h(\kappa, x, t_m) 
= \ln \rho_{\mathrm{obs}}(y_m \mid x) + h(\kappa, x, t_m^+).
\end{equation}
\end{proposition}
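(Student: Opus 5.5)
The plan is to obtain \eqref{eq:backwardPDE-log-post} by substituting $w = e^{h}$ into the backward Kolmogorov equation for $w$ stated above, and to obtain \eqref{eq:jump-obs-lnw} by taking logarithms in the multiplicative update \eqref{eq:jump-obs}. Throughout I take from the preceding discussion that $w(\kappa,\cdot,\cdot)$ is $C^{2,1}$ on each interval between observation times and satisfies $\partial_t w + \mathcal{A}^{(\kappa)} w = 0$ there. I also need $w>0$. This follows from the representation \eqref{eq:hrepr} whenever $\obsden(y_k\mid\cdot)>0$, which holds for the Gaussian model \eqref{eq:obs}. Positivity makes $h=\ln w$ well defined and as smooth as $w$ on each interval.

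The first step is to record the derivatives of $w=e^{h}$ in terms of $h$:
\[
\partial_t w = e^{h}\,\partial_t h,\qquad \partial_{x_i} w = e^{h}\,\partial_{x_i} h,\qquad \partial_{x_i x_j} w = e^{h}\big(\partial_{x_i x_j} h + \partial_{x_i} h\,\partial_{x_j} h\big).
\]
The second step is to insert these into $\partial_t w + \sum_i b_i \partial_{x_i} w + \tfrac12\sum_{i,j} a_{ij}\partial_{x_i x_j} w = 0$. Dividing by $e^{h}>0$ gives
\[
\partial_t h + \mathcal{A}^{(\kappa)} h + \tfrac12\sum_{i,j} a_{ij}\,\partial_{x_i} h\,\partial_{x_j} h = 0 .
\]
Because $a$ is symmetric, the last sum equals $\tfrac12 \nabla_x^\top h\, a\, \nabla_x h$. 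This is exactly \eqref{eq:backwardPDE-log-post} on each open interval $(t_{m-1},t_m)$. The equation extends to $t_m$ itself as a one-sided (left) derivative, using the convention that $h(\kappa,x,t_m)$ denotes the left value.

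The third step handles the boundary condition. Taking logarithms in \eqref{eq:jump-obs} gives $h(\kappa,x,t_m^-)=\ln\obsden(y_m\mid x)+h(\kappa,x,t_m^+)$, and identifying $h(\kappa,x,t_m)$ with the left-continuous value $h(\kappa,x,t_m^-)$ yields \eqref{eq:jump-obs-lnw}. For $m=M_0$ there is no future observation, so the convention $h(\kappa,x,T^+)=0$ recovers the terminal condition $h(\kappa,x,T)=\ln\obsden(y_{M_0}\mid x)$.

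The computation itself is routine. The only delicate point is justifying the regularity and positivity of $w$ on each interval, so that the chain rule applies and the logarithm is finite. I would take this from the Feynman--Kac representation \eqref{eq:hrepr}, as the paper does for the PDE satisfied by $w$, and I would not re-derive it here.
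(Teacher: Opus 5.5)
Your proposal is correct and follows essentially the same chain-rule computation as the paper, which writes $\partial_t h = \partial_t w/w = -\mathcal{A}^{(\kappa)}w/w$ and $\mathcal{A}^{(\kappa)}h = \mathcal{A}^{(\kappa)}w/w - \tfrac12 \nabla_x^\top h\, a\, \nabla_x h$ and adds them (you substitute $w=e^{h}$ and divide by $e^{h}$ instead), and which obtains the jump condition by taking logarithms in the multiplicative update at $t_m$. Your points on the positivity of $w$, the left-continuity convention at $t_m$, and $h(\kappa,x,T^+)=0$ are details the paper leaves implicit, and you handle them correctly.
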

{\em Key idea:} Since $h$ evolves differently across intervals $(t_{m-1}, t_m]$, to approximate $h$ for a fixed $\kappa$ we employ a collection of neural networks (NNs) $\{\widetilde h_{\theta_m}\}_{m=1}^{M_0}$ with parameters $\bm{\theta} = (\theta_1,\dots,\theta_{M_0})$.  To train these networks, we enforce the PDE \eqref{eq:backwardPDE-log-post} over each interval $(t_{m-1},t_m]$ together with the jump condition \eqref{eq:jump-obs-lnw} at $t_m$. For this purpose, we introduce a reference probability measure $p_{\mathrm{ref}}$ on $\mathbb{R}^d$ (e.g. a Gaussian distribution or Uniform distribution over some compact set) and sample space--time points $(x,t)$ with $x \sim p_{\mathrm{ref}}$ and $t \sim U(t_{m-1},t_m]$. The networks $\{\widetilde h_{\theta_m}\}$ are then trained {\em simultaneously} so that these conditions are approximately satisfied at the sampled points $(x,t)$.
This leads to the following loss function for determining $\bm{\theta} = (\theta_1,\dots,\theta_{M_0})$:
\begin{align*}
\mathcal{L}_{\mathrm{smooth}}(\kappa, \bm{\theta})
&= \sum_{m=1}^{M_0}\Big[\EE_{\substack{x \sim p_{\mathrm{ref}} \\ t \sim U(t_{m-1},t_m]}}
\Big\|
\partial_t \widetilde h_{\theta_m}(\kappa, x,t) 
+ \mathcal{A}^{(\kappa)} \widetilde h_{\theta_m}(\kappa, x,t) 
 + \tfrac{1}{2} \nabla_x^\top \widetilde h_{\theta_m} a(x) \nabla_x \widetilde h_{\theta_m}
\Big\|^2  \\
& \quad + 
\EE_{x \sim p_{\mrm{ref}}}\Big\|
\widetilde h_{\theta_m}(\kappa, x, t_m) 
- \ln \rho_{\mathrm{obs}}(y_m \mid x) 
- \widetilde h_{\theta_{m+1}}(\kappa, x, t_m^+)
\Big\|^2\Big].
\end{align*}
The optimal $\theta$ is then given by $\bm{\theta}^* \equiv \bm{\theta}^*(\kappa)  = \arg\min_{\bm{\theta}} \mathcal{L}_{\mathrm{smooth}}(\kappa, \bm{\theta})$, and the learned approximate conditional score function  by
\begin{align}\label{eq:learned-score}
\widetilde s^*(\kappa,x,t) \equiv \widetilde s_{\bm{\theta}^* }(\kappa,x,t) = 
\nabla_x \widetilde h_{\theta_m^*}(\kappa,x,t),
\quad t \in (t_{m-1},t_m].
\end{align}

{\bf Remark:} Importantly, if the observation density is known up to a multiplicative constant independent of $x$, i.e.,
$$
\obsden(y_m \mid x) = C_m \, q_{\mathrm{obs}}(y_m \mid x),
\quad C_m>0,
$$
then $\ln \obsden(y_m \mid x)$ differs from $\ln q_{\mathrm{obs}}(y_m \mid x)$ by an additive constant $\ln C_m$, which does not affect the PDE residuals or jump constraints in the training. Specifically, if $\breve h_{\theta_m}$ denotes the NN trained with $\ln q_{\mathrm{obs}}(y_m \mid x)$ in place of $\ln \obsden(y_m \mid x)$ in $\mathcal{L}_{\mathrm{smooth}}$, and  $\breve{\bm{\theta}}^*$ denotes the corresponding optimizer,  it can be seen that for $t \in (t_{m-1},t_m]$,
$ \widetilde h_{\theta_m^*}(\kappa, x, t) = \breve h_{\breve\theta_m^*}(\kappa, x, t)+c_m,  $ where $c_m =\sum_{k=m}^{M_0} \ln C_k$. 
Hence,  the learned score field is invariant:
\[
\widetilde s(\kappa,x,t) \equiv \nabla_x \widetilde h_{\theta_m^*}(\kappa,x,t)
=
\nabla_x \breve h_{\breve{\theta}_m^*}(\kappa,x,t),
\quad t \in (t_{m-1},t_m].
\]
Our approach leads to the following algorithm.
\begin{algorithm}[H]
\caption{Approximation of backward conditional score function and smoothing distribution}
\label{alg:pinn-backward-filter}
\begin{enumerate}
    \item \textbf{Input:} Parameter $\kappa$, observations $\by_{1:M_0}$, observation times $\{t_m\}_{m=1}^{M_0}$.

    \item Initialize the interval-wise neural networks
    $\{\widetilde h_{\theta_m}\}_{m=1}^{M_0}$.

    \item Train the networks by solving
  $
    \bm{\theta}^*
    =
    \arg\min_{\bm{\theta}}
    \mathcal{L}_{\mathrm{smooth}}(\kappa, \bm{\theta})
  $

    \item Compute the learned conditional score function $\widetilde s_{\bm{\theta}^*}(\kappa,x,t)$ by \eqref{eq:learned-score}.
   
    \item \textbf{Output:} Trained networks $\{\widetilde h_{\theta_m^*}\}_{m=1}^{M_0}$ and score function $\widetilde s_{\bm{\theta}^*}$.
\end{enumerate}
\end{algorithm}
This training strategy is related in spirit to physics-informed neural networks (PINNs) \cite{RaPeKa19, ChYaDuKa21}, in that it enforces a governing PDE through a residual loss. However, unlike standard PINN formulations that target continuous solutions of classical PDEs, our setting involves a discontinuous target function $h$ and a sequence of distinct but coupled NNs $\{\widetilde h_{\theta_m}\}$ across successive time intervals, which must simultaneously satisfy the PDE within each interval and the observation-driven jump conditions at the observation times.

\subsection{E-Step (Second half) and M-step: Computing and Maximizing ELBO}

Suppose $\kappa_i$ is the SDE parameter at epoch $i$ of the variational EM algorithm, and let $\bm{\theta}^{(i),*}(\kappa_i) \equiv \bm{\theta}^{(i),*} = (\theta^{(i),*}_{1}, \dots, \theta^{(i),*}_{M_0})$ denote the parameters obtained from the first half of $E$-step of next epoch, $i+1$. Then the law  $\widetilde Q^{(i),*}\equiv \widetilde Q_{\bm{\theta}^{(i), *}}$ of the SDE
\[
d\widetilde X(t)
=
\widetilde b_{\bm{\theta}^{(i),*}}(\kappa,\widetilde X(t),t)\,dt
+
\sigma(\widetilde X(t))\,dW(t), \quad \widetilde b_{\bm{\theta}^{(i),*}}(\kappa,t,x)
=
b(\kappa,x) + a(x)\widetilde s_{\bm{\theta}^{(i),*}}(\kappa,x,t)
\]
approximates $\Post^{(\kappa_i)}(\cdot \mid \by_{1:M_0})$. Here the conditional score $\widetilde s_{\bm{\theta}^{(i),*}}$ is as in \eqref{eq:learned-score}. The  structure of  the posterior SDE  enables us to calculate the ELBO in explicit form by Girsanov's theorem (see \eqref{eq:elbo-expression})
% which can be easily appoximated by Monte-Carlo (MC). 
% Under standard conditions, Girsanov's theorem gives (see \eqref{eq:kl-sde})
% \begin{align}\label{eq:kl-girsanov}
% \mathrm{KL}\big(\widetilde Q^{(i),*} \,\|\, \Pre^{(\kappa)}\big)
% =
% \frac{1}{2}\,\mathbb{E}_{\widetilde Q^{(i),*}}\!\left[
% \int_0^T \widetilde s_{\bm{\theta}^{(i),*}}(t,\widetilde X(t))^\top a(\widetilde X(t))\widetilde s_{\bm{\theta}^{(i),*}}(t,\widetilde X(t))\,dt
% \right].
% \end{align}
% Substituting \eqref{eq:kl-girsanov} into \eqref{eq:elbo-clean} leads to the tractable objective
\begin{align}\label{eq:elbo-controlled}
\begin{aligned}
\mathrm{ELBO} &(\widetilde Q^{(i),*}, \kappa \mid \by_{1:M_0})
=
\mathbb{E}_{\widetilde Q^{(i),*}} \bigg[
\sum_{m=1}^{M_0} \ln \obsden(y_m \mid \widetilde X(t_m))\\
& -
\frac{1}{2} \int_0^T \widetilde s_{\bm{\theta}^{(i),*}}(t,\widetilde X(t))^\top a(\widetilde X(t)) \widetilde s_{\bm{\theta}^{(i),*}}(t,\widetilde X(t))\,dt
\bigg].
\end{aligned}
\end{align}

{\bf Time discretization, reparameterization and MC Approximation:}
Fix a grid $0=s_0<\cdots<s_{N_0}=T$ with step size $\Delta \equiv s_i -s_{i-1} \ll 1$, chosen such that $\{t_m\}\subset\{s_n\}$ and $t_m=s_{n_m}$. For simplicity, assume the initial condition $x_0$ is known and fixed. Also, for notational convenience, we drop the epoch index $i$ and write $\bm{\theta}^*$ instead of $\bm{\theta}^{(i),*}$.
Let $\phi_\kappa$ be the one-step map given by
\[
\phi_\kappa(s,u,y)
=
u + \widetilde b_{\bm{\theta}^*}(\kappa,u,s)\Delta + \sigma(u)\sqrt{\Delta}\,y,
\quad s\in(t_{m-1},t_m].
\]
Then $\widetilde X^D\equiv (x_0,\widetilde X^D_1,\dots,\widetilde X^D_{N_0})$ generated recursively via
\begin{align}
    \label{eq:one-step-map}
    \widetilde X^D_{n+1} = \phi_\kappa(s_n,\widetilde X^D_n,\xi_{n+1}), \quad \xi_n \stackrel{\mrm{iid}}{\sim}\mathcal{N}(0,I_d)
\end{align}
defines  a deterministic map
$
\widetilde X^D
\equiv 
(x_0,\widetilde X^D_1,\dots,\widetilde X^D_{N_0}) \dfeq \Phi(\kappa,x_0,\bm{\xi}_{1:N_0}),
$
where $\Phi$ is the time-ordered composition of $\phi_\kappa$ along the grid $\{s_i\}$ driven by the input noise sequence $\bm{\xi}_{1:N_0}$. This leads to the following {\em Monte-Carlo approximation of ELBO.}
\begin{align}\label{eq:elbo-mc}
\begin{aligned}
\mathrm{ELBO}&_{\mathrm{MC}}(\widetilde Q^{*}, \kappa)
= \ 
\frac{1}{L}\sum_{l=1}^L \Bigg[
\sum_{m=1}^{M_0}
\ln \rho\!\Big(y_m \mid \widetilde X^{D,(l)}_{n_m}\Big) \\
&-\frac{1}{2}\sum_{m=1}^{M_0}\sum_{n: t_{m-1}<s_n\le t_m}
g_{\theta_m^*}^\top(\widetilde X^{D,(l)}_n,s_n)\,
a(\widetilde X^{D,(l)}_n)\,
g_{\theta_m^*}(\widetilde X^{D,(l)}_n,s_n)\,\Delta
\Bigg],
\end{aligned}
\end{align}
where the $\bm{\xi}^{(l)}_{1:N_0}$, $l=1,\dots,L$, are i.i.d. samples and $\widetilde X^{D,(l)} := \Phi(\kappa,x_0,\bm{\xi}^{(l)}_{1:N_0})$. Since $\bm{\xi}^{(l)}_{1:N_0}$ is independent of $\kappa$ and $\Phi$ is a deterministic function of $(\kappa,x_0,\bm{\xi})$, gradients $\nabla_\kappa \mathrm{ELBO}_{\mathrm{MC}}(\widetilde Q^{*},\kappa)$ in the M-step can be computed via backpropagation through the recursion defining $\Phi$.

\begin{algorithm}[H]
\caption{Variational EM algorithm for parameter inference}
\label{alg:pinn-elbo-em}
\begin{enumerate}
    \item \textbf{Input:} Initial parameter $\kappa^{(0)}$, data $\by_{1:M_0}$, initial state $x_0$, grid $\{s_n\}_{n=0}^{N_0}$, number of trajectories $L$, and a convex coefficient $\alpha$.

    \item \textbf{For} $i=0,1,2,\dots$:
    \begin{enumerate}
        \item Run Algorithm~\ref{alg:pinn-backward-filter} with $\kappa=\kappa^{(i)}$ to obtain $\widetilde s_{\bm{\theta}^{(i),*}}$.

        \item Compute the drift function of the posterior SDE:
        $
       \widetilde b_{\bm{\theta}^{(i),*}}(\kappa,t,x) =
      b(\kappa,x) + a(x)\widetilde s_{\bm{\theta}^{(i),*}}(\kappa,x,t)
       $

        \item Simulate $L$ posterior trajectories $\widetilde X^{D,\kappa, (l)} = \Phi\lf(\kappa,x_0,\bm{\xi}^{(l)}_{1:N_0}\ri)$ and compute ELBO by \eqref{eq:elbo-mc}.

        \item Update
        $
        \kappa^{(i+1)}
        =
        \arg\max_{\kappa\in K_0}
        \mathrm{ELBO}_{\mathrm{MC}}(\widetilde Q^{(i),*}, \kappa),
      $
        where $\mathrm{ELBO}_{\mathrm{MC}}$ is as in \eqref{eq:elbo-mc}. The maximization is performed in closed form when available, or by gradient ascent/Adam.
    \end{enumerate}

    \item \textbf{Output:} $\hat\kappa$.
\end{enumerate}
\end{algorithm}

\paragraph{Discussion of existing methodology and its limitations:}
As mentioned in the introduction, a standard approach to estimation of smoothing distribution and SDE-parameter proceeds by time discretization followed by MCMC sampling. Specifically, one approximates the latent path $X_{[0,T]}$ by a high-dimensional vector $\BX_{0:N} = (X(s_0), X(s_1), \ldots, X(s_N))$ over a fine grid $\{s_i\}_{i=0}^N$ with step size $\Delta = s_i - s_{i-1} \ll 1$. Using the Euler--Maruyama scheme,
\[
X(s_i) = X(s_{i-1}) + b(\kappa, X(s_{i-1}))\,\Delta + \sigma(X(s_{i-1})) \sqrt{\Delta}\,\xi_i,
\quad \xi_i \stackrel{\mathrm{i.i.d.}}{\sim} \mathcal{N}(0, I),
\]
one obtains the approximate posterior density
\begin{align*}
\pi_{\mathrm{post}}(\bx_{0:N} \mid \by_{1:M_0})
&\propto 
\prod_{i=1}^N 
\mathcal{N}\!\big(x_i \,\big|\, x_{i-1} + b(\kappa, x_{i-1})\Delta,\; a(x_{i-1})\Delta \big)
\prod_{m=1}^{M_0} \obsden(y_m \mid x_{i_m}),
\end{align*}
where the grid is chosen such that $s_{i_m} = t_m$ and $a(x) = \sigma(x)\sigma(x)^\top$.

Since $\pi_{\mathrm{post}}(\cdot \mid \by_{1:M_0})$ is known only up to a normalization constant, MCMC methods can be used to generate approximate samples. However, this approach has several well-known limitations. First, discretization converts the problem into an artificial high-dimensional density sampling task in $\mathbb{R}^{Nd}$, even when the underlying state process is low-dimensional. Second, the efficiency of MCMC depends critically on the proposal distribution, and mixing typically deteriorates as $\Delta \to 0$. When observations are sparse in time, the posterior over paths becomes highly constrained by only a few observations. While various proposals for sampling diffusion bridges exist, at least in the noise-free setting (e.g., \cite{DeHu06, BeRoSt08, LiChMy10, WGBS17-2}), they are often ineffective for bridging long time intervals in realistic SDE models. As a result, standard MCMC methods often yield poor approximations of the smoothing distribution in such regimes (demonstrated in our experiments).

A further conceptual limitation lies in how discretization interacts with conditioning. In the standard approach, time discretization is applied at the level of the prior SDE $X$, and the resulting approximation is then used to construct an approximate posterior density. While the Euler--Maruyama scheme provides a controlled approximation of the prior dynamics over small time steps, the smoothing distribution is obtained through Bayes' rule, which introduces a nonlinear dependence on the prior path measure. Consequently, although the posterior approximation error is induced by the prior discretization, its effect on the posterior distribution is indirect and generally difficult to characterize.

In contrast, our approach applies discretization to the {\em posterior SDE} $\widetilde X$ directly. As a result, the numerical scheme directly approximates the posterior (smoothing) distribution and, consequently, the associated ELBO. In particular, the discretization error arises at the level of the target quantity of interest, and therefore, by standard Euler--Maruyama approximation results for SDEs, is expected to decrease as the discretization time step  becomes smaller.

\section{Experiments}\label{experiments}

We evaluate the proposed framework on two stochastic systems of increasing complexity: a 2D biochemical model (Michaelis--Menten) and a 4D nonlinear multi-stable system (ring-coupled double well). Both experiments follow an identical setup to enable direct comparison.

{\bf Models:}
Both systems are governed by SDEs with drift $b(x;\kappa)$ and isotropic diffusion $\sigma(x)=\sigma I$.

\textbf{2D Michaelis--Menten (MM):}
The Michaelis--Menten mechanism models describes the interaction conversion of a substrate (S) to product (P) in presence of enzyme (E) through the reactions:
$$
S + E \;\underset{k_{-1}}{\overset{k_1}{\rightleftharpoons}}\; C
\;\xrightarrow{k_2}\; P + E.
$$
The conservation laws, $X_E+X_C =J$ and $X_S+X_C+X_P = J'$, where $J$ and $J'$ are time-independent constants, allows us to embed the system in $[0, \infty)^2$, by modeling $X=(X_S, X_E)$ via an SDE with drift function: 
\[
b_1(x) = -k_1 x_1 x_2 + k_{-1}(J - x_2), \quad
b_2(x) = -k_1 x_1 x_2 + (k_{-1} + 1.0)(J - x_2).
\]
The parameters $k_1$, $k_{-1}$, and $k_2$ represent reaction rates, and we fix  $J=3.0$ and $\sigma=0.3$.

\textbf{4D Ring-Coupled Double Well:} A high-dimensional Ringed Double-well SDE physically models a spatially coupled network of bistable states driven by thermal fluctuations. We consider a 4D-SDE with with $\sigma=0.4$, and drift
\begin{align*}
b_i(x) = k_i x_i - x_i^3 + 0.5 \sum_{j \in \mathscr{N}(i)} (x_j - x_i), \quad i=1,\dots,4,
\end{align*}
where $\mathscr{N}(i)$ denotes the two neighbors of node $i$ in a ring topology, i.e., $\mathscr{N}(1)= \mathscr{N}(3)=\{4, 2\}$, $\mathscr{N}(2)= \mathscr{N}(4)= \{1, 3\}$.

\textbf {Inference task:}
(MM) $\kappa = (k_1, k_{-1})$; $k_2=1$ is fixed. (4D-DW): $\kappa = (k_1,k_2,k_3)$; $k_4=1$ is fixed.
% \[
% \theta =
% \begin{cases}
% (k_1, k_{-1}) & \text{(MM)},\\
% (k_1,k_2,k_3), \; k_4=1 & \text{(4D)}.
% \end{cases}
% \]

Ground truth parameters:
\[
\mrm{\bf MM:}\  (k_1,k_{-1}) = (1.0,1.5), \qquad \mrm{\bf 4D-DW:}\ 
k_1=k_2=k_3=1.0.
\]

% \paragraph{Data generation.}
% In both cases, trajectories are simulated via Euler--Maruyama and subsampled at 5 time points:
% \[
% y_m = X(t_m) + \varepsilon_m, \quad \varepsilon_m \sim \mathcal{N}(0,0.2^2).
% \]

% \begin{center}
% \begin{tabular}{c|c|c}
%  & Michaelis--Menten & 4D Double Well \\
% \hline
% Dimension & $2$ & $4$ \\
% Governing PDE Dimension & $3$ & $5$ \\
% Time horizon & $T=0.5$ & $T=5$ \\
% Step size & $0.001$ & $0.05$ \\
% Initial state & $(1,1)$ & $(1.5,1.5,1.5,1.5)$ \\
% Observation times & $0.1, 0.2, 0.3, 0.4, 0.5$ & $1.0, 2.0, 3.0, 4.0, 5.0$ \\
% \end{tabular}
% \end{center}
\paragraph{Data generation.}
Trajectories are simulated via the Euler--Maruyama scheme and subsampled at 5 equidistant time points, subject to additive Gaussian noise $y_m = X(t_m) + \varepsilon_m$, where $\varepsilon_m \sim \mathcal{N}(0,0.2^2)$. For the 2D Michaelis--Menten system (governing PDE dimension 3), we simulate up to $T=0.5$ with step size $\Delta t=0.001$ from $X(0)=(1,1)$, extracting observations at $t_m \in \{0.1, \dots, 0.5\}$. For the 4D Double Well (governing PDE dimension 5), we simulate up to $T=5$ with $\Delta t=0.05$ from $X(0)=(1.5, 1.5, 1.5, 1.5)$, extracting observations at $t_m \in \{1.0, \dots, 5.0\}$.

% \paragraph{Results.}
% We report parameter convergence and trajectory consistency for both systems.

\looseness=-1
\textbf{Parameter inference:}
Figures~\ref{fig:mm_proposed} and~\ref{fig:ring-double-well} show that our proposed method converges reliably toward true parameters in both models, even from poor initialization. In contrast, the MCMC baseline exhibits poor mixing in the MM system and unstable behavior with large fluctuations in the 4D system.

\textbf{Trajectory quality.}
Figures~\ref{fig:mm_traj} and~\ref{fig:4d_doublewell_traj} illustrate trajectories sampled from the learned posterior SDEs after convergence as recoveries of the latent states in both cases. In both models, sampled paths remain consistent with noisy observations and capture the underlying dynamics smoothly.

\textbf{Implementation Details \& Computational Cost.} 
To enable efficient computation, we model the approximate log-message functions $\widetilde h_\theta$ using a uniform architecture across all observation intervals: 5 hidden layers of 70 neurons with $\tanh$ activations for smooth gradients. While we employ distinct NNs across diiferent observation intervals, formulating the $\mathcal{L}_{\mrm{smooth}}$ as a global sum allows the gradient computations for the PDE residuals and jump conditions to be batched and distributed across multiple GPUs. This design ensures the framework remains scalable as the number of observations increases. For the specific experiments presented in this work—both featuring exactly 5 sparse observations—the method executes efficiently locally utilizing standard compute backends (e.g., Apple MPS or NVIDIA CUDA). In these instances, the initial PINN training requires approximately 20 minutes when employing a learning rate decay schedule (e.g., annealing from $10^{-2}$ down to $10^{-4}$), whereas subsequent EM iterations take 1 minute or less by warm-starting the networks with parameters optimized in the previous iteration.

% ------------------ FIGURES (UNCHANGED) ------------------

% \begin{figure}[]
%     \centering
%     \begin{subfigure}[c]{0.45\textwidth}
%         \centering
%         \includegraphics[width=\textwidth]{EM_k1_k_minus1_Michaelis Menton.png}
%         \caption{Proposed PINN-based Method}
%     \end{subfigure}
%     \hfill
%     \begin{subfigure}[c]{0.45\textwidth}
%         \centering
%         \includegraphics[width=\textwidth]{Michaelis_Menten.png}
%         \caption{MCMC Baseline}
%     \end{subfigure}
%     \caption{Parameter inference for the stochastic Michaelis--Menten system.}
%     \label{fig:mm_results}
% \end{figure}

% \begin{figure}[t]
%     \centering
%     \includegraphics[width=0.7\textwidth]{Michaelis-Menten_100Paths.png}
%     \caption{Trajectory samples for the Michaelis--Menten system.}
%     \label{fig:mm_traj}
% \end{figure}

\begin{figure}[t]
    \centering
    % --- TOP ROW ---
    \begin{subfigure}[c]{0.45\textwidth}
        \centering
        \includegraphics[width=\textwidth]{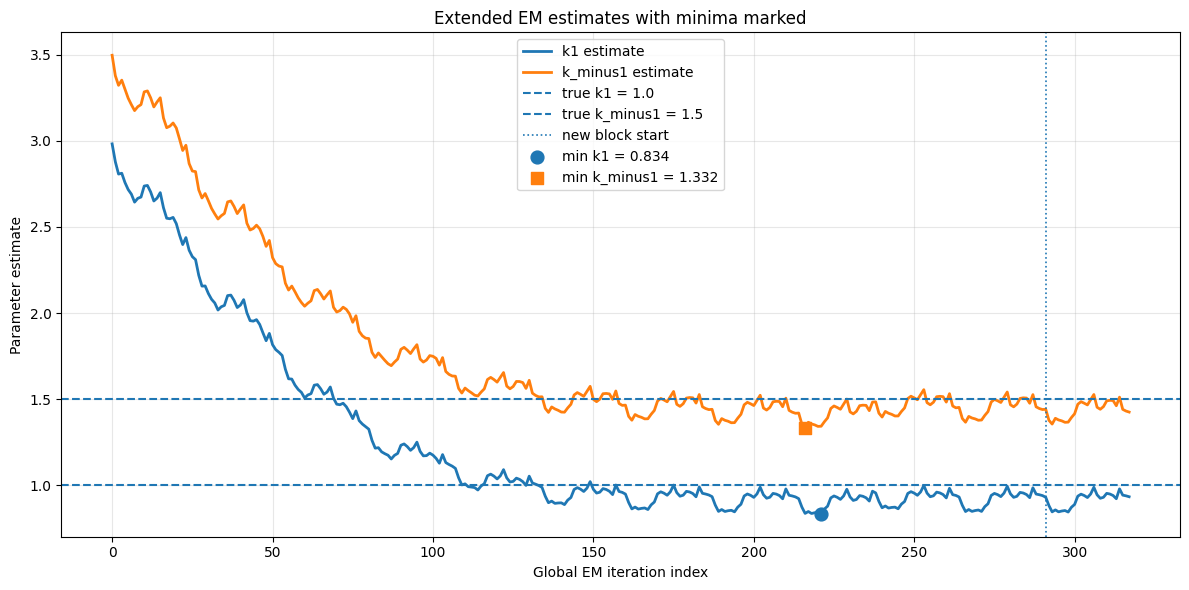}
        \caption{Proposed PINN-based Method}
        \label{fig:mm_proposed}
    \end{subfigure}
    \hfill
    \begin{subfigure}[c]{0.45\textwidth}
        \centering
        \includegraphics[width=\textwidth]{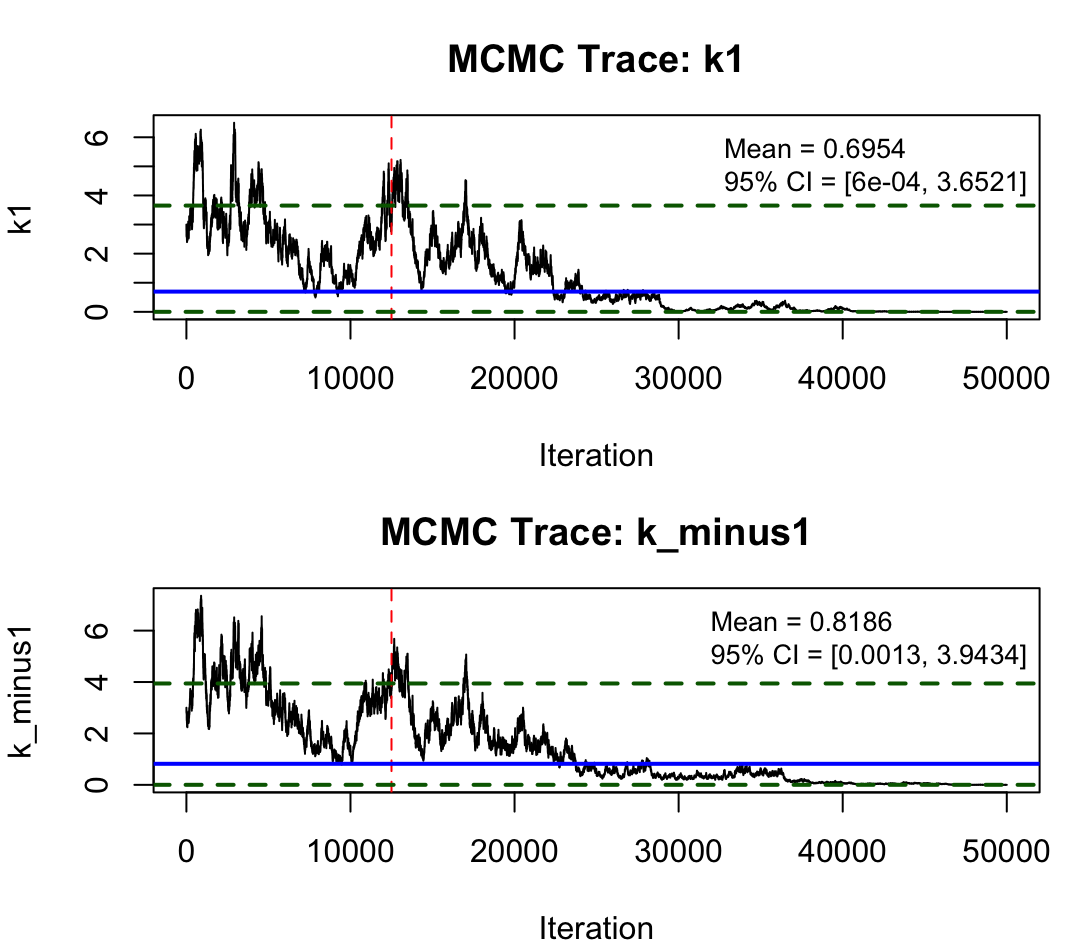}
        \caption{MCMC Baseline}
        \label{fig:mm_mcmc}
    \end{subfigure}
    
    \vspace{4mm} % Adds a small gap between the two rows
    
    % --- BOTTOM ROW (Centered) ---
    \begin{subfigure}[c]{0.70\textwidth} % Slightly reduced from 0.7 to save vertical space
        \centering
        \includegraphics[width=\textwidth]{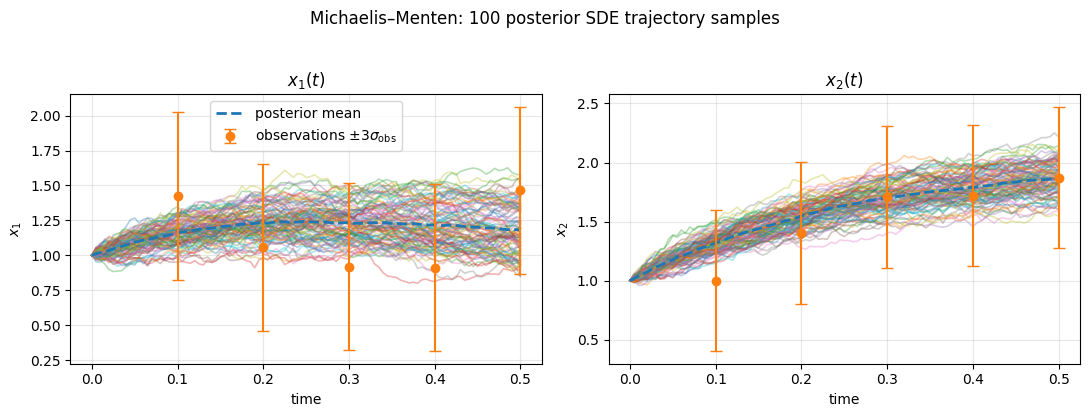}
        \caption{Trajectory samples for the Michaelis--Menten system.}
        \label{fig:mm_traj}
    \end{subfigure}
    
    \vspace{-2mm} % Pulls the main caption a bit closer to the images
    \caption{Parameter inference and trajectory results for the stochastic Michaelis--Menten system.}
    \label{fig:mm_results_combined}
    \vspace{-3mm} % Pulls the subsequent text up to save space
\end{figure}

\begin{figure}[H]
    \centering
    \begin{subfigure}[c]{0.40\textwidth}
        \centering
        \includegraphics[width=\textwidth]{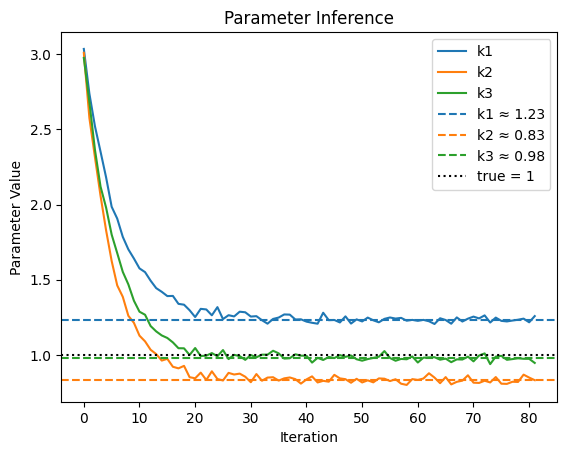}
        \caption{Proposed Method}
    \end{subfigure}
    \hfill
    \begin{subfigure}[c]{0.40\textwidth}
        \centering
        \includegraphics[width=\textwidth]{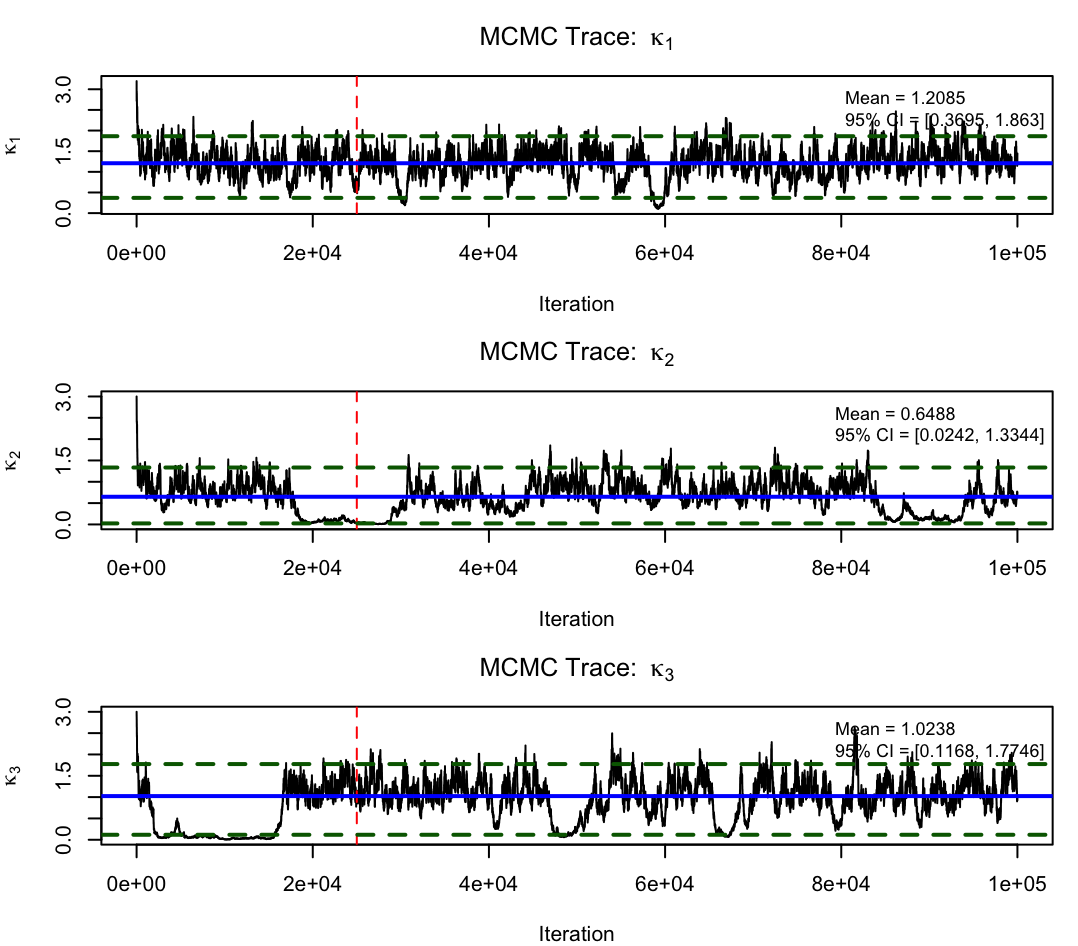}
        \caption{MCMC Baseline}
    \end{subfigure}
    \caption{Parameter inference for the 4D ring-coupled double well system.}
    \label{fig:ring-double-well}
\end{figure}

\begin{figure}[]
    \centering
    \includegraphics[width=0.7\textwidth]{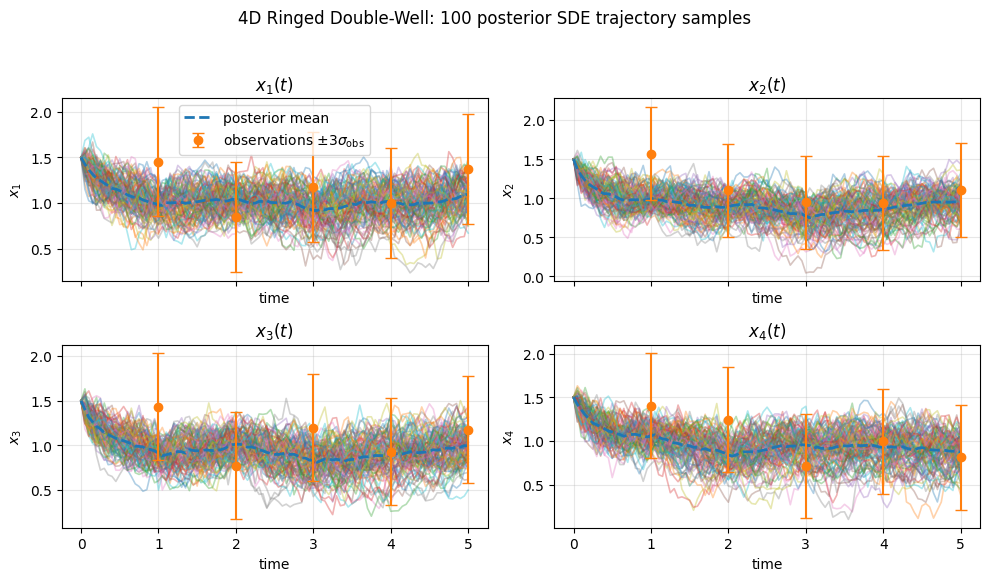}
    \caption{Trajectory samples for the 4D double well system.}
    \label{fig:4d_doublewell_traj}
\end{figure}

\section{Conclusion}\label{conclusion}

We introduced a variational EM framework for joint smoothing and parameter inference in partially observed SDEs. The method is based on a path-space formulation of the smoothing distribution and a learned approximation of the backward-in-time conditional score. This allows the posterior dynamics to be represented through a tractable diffusion process, enabling scalable trajectory sampling and parameter learning from sparse observations.

The proposed approach avoids explicit discretization of prior SDE, which can introduce artificial high-dimensionality,  and instead relies on a neural parameterization of the conditional score combined with a variational ELBO objective. This leads to a unified algorithm that integrates continuous-time stochastic dynamics with VI, and performs favorably in regimes with limited observational data.

Our method depends on the accuracy of the learned conditional score, which is determined by the expressiveness and optimization of the neural parameterization. Inaccuracies in this approximation may affect the induced posterior dynamics and consequently the ELBO optimization. In addition, as with general EM-type procedures, performance can depend on initialization and may be influenced by nonconvexity in the parameter landscape. This can be largely mitigated through multiple initializations.  Developing more expressive architectures and improved optimization strategies for stable score learning in high-dimensional settings is an important direction for future work.
% \bibliographystyle{neurips_2026}
% \bibliography{references}
%\bibliographystyle{unsrt}
\bibliographystyle{plain}
\bibliography{Ref-ML-new}

@article {SuGa16,
    AUTHOR = {Sutter, Tobias and Ganguly, Arnab and Koeppl, Heinz},
     TITLE = {A variational approach to path estimation and parameter
              inference of hidden diffusion processes},
   JOURNAL = {J. Mach. Learn. Res.},
  FJOURNAL = {Journal of Machine Learning Research (JMLR)},
    VOLUME = {17},
      YEAR = {2016},
     PAGES = {Paper No. 190, 37},
      ISSN = {1532-4435},
}

@article {GaMiZh23,
    AUTHOR = {Ganguly, Arnab and Mitra, Riten and Zhou, Jinpu},
     TITLE = {Infinite-dimensional optimization and {B}ayesian nonparametric
              learning of stochastic differential equations},
   JOURNAL = {J. Mach. Learn. Res.},
  FJOURNAL = {Journal of Machine Learning Research (JMLR)},
    VOLUME = {24},
      YEAR = {2023},
     PAGES = {Paper No. [159], 39},
}

@artcle{GaMiZh25,
      title={Nonparametric learning of stochastic differential equations from sparse and noisy data}, 
      author={Arnab Ganguly and Riten Mitra and Jinpu Zhou},
      year={2025},
      eprint={2508.11597},
      archivePrefix={arXiv},
      primaryClass={stat.ML},
      url={https://arxiv.org/abs/2508.11597}, 
}

@book {Bis22a,
    AUTHOR = {Bishwal, Jaya P. N.},
     TITLE = {Parameter estimation in stochastic volatility models},
 PUBLISHER = {Springer, Cham},
      YEAR = {2022},
     PAGES = {xxx+613},
      ISBN = {978-3-031-03860-0; 978-3-031-03861-7},
       DOI = {10.1007/978-3-031-03861-7},
 }

@book {Kut03,
    AUTHOR = {Kutoyants, Yury A.},
     TITLE = {Statistical inference for ergodic diffusion processes},
    SERIES = {Springer Series in Statistics},
 PUBLISHER = {Springer-Verlag London, Ltd., London},
      YEAR = {2004},
     PAGES = {xiv+481},
      ISBN = {1-85233-759-1},
}

@book {Iacus08,
    AUTHOR = {Iacus, Stefano M.},
     TITLE = {Simulation and inference for stochastic differential
              equations},
    SERIES = {Springer Series in Statistics},
      NOTE = {With R examples},
 PUBLISHER = {Springer, New York},
      YEAR = {2008},
     PAGES = {xviii+284},
      ISBN = {978-0-387-75838-1},
}

@article {OpArch09,
    AUTHOR = {Opper, Manfred and Archambeau, C\'{e}dric},
     TITLE = {The variational {G}aussian approximation revisited},
   JOURNAL = {Neural Comput.},
  FJOURNAL = {Neural Computation},
    VOLUME = {21},
      YEAR = {2009},
    NUMBER = {3},
     PAGES = {786--792},
      ISSN = {0899-7667},
}

@incollection {ArOp11,
    AUTHOR = {Archambeau, C\'{e}dric and Opper, Manfred},
     TITLE = {Approximate inference for continuous-time {M}arkov processes},
 BOOKTITLE = {Bayesian time series models},
     PAGES = {125--140},
 PUBLISHER = {Cambridge Univ. Press, Cambridge},
      YEAR = {2011},
}

@incollection{CsOp13,
title = {Approximate inference in latent Gaussian-Markov models from continuous time observations},
author = {Cseke, Botond and Opper, Manfred and Sanguinetti, Guido},
booktitle = {Advances in Neural Information Processing Systems 26},
editor = {C. J. C. Burges and L. Bottou and M. Welling and Z. Ghahramani and K. Q. Weinberger},
pages = {971--979},
year = {2013},
publisher = {Curran Associates, Inc.},
}

@article{GoWi11,
author = {Andrew Golightly  and Darren J. Wilkinson },
title = {Bayesian parameter inference for stochastic biochemical network models using particle Markov chain Monte Carlo},
journal = {Interface Focus},
volume = {1},
number = {6},
pages = {807-820},
year = {2011},
}

@article {GoWi05,
    AUTHOR = {Golightly, A. and Wilkinson, D. J.},
     TITLE = {Bayesian inference for stochastic kinetic models using a
              diffusion approximation},
   JOURNAL = {Biometrics},
  FJOURNAL = {Biometrics. Journal of the International Biometric Society},
    VOLUME = {61},
      YEAR = {2005},
    NUMBER = {3},
     PAGES = {781--788},
      ISSN = {0006-341X},
}

@article {GoWi08,
    AUTHOR = {Golightly, A. and Wilkinson, D. J.},
     TITLE = {Bayesian inference for nonlinear multivariate diffusion models
              observed with error},
   JOURNAL = {Comput. Statist. Data Anal.},
  FJOURNAL = {Computational Statistics \& Data Analysis},
    VOLUME = {52},
      YEAR = {2008},
    NUMBER = {3},
     PAGES = {1674--1693},
      ISSN = {0167-9473},
}

@article {WGBS17,
    AUTHOR = {Whitaker, Gavin A. and Golightly, Andrew and Boys, Richard J.
              and Sherlock, Chris},
     TITLE = {Bayesian inference for diffusion-driven mixed-effects models},
   JOURNAL = {Bayesian Anal.},
  FJOURNAL = {Bayesian Analysis},
    VOLUME = {12},
      YEAR = {2017},
    NUMBER = {2},
     PAGES = {435--463},
      ISSN = {1936-0975},
   MRCLASS = {62F15 (60H10)},
  MRNUMBER = {3620740},
       DOI = {10.1214/16-BA1009},
       URL = {https://doi-org.libezp.lib.lsu.edu/10.1214/16-BA1009},
}

@article {Chib01,
    AUTHOR = {Elerian, Ola and Chib, Siddhartha and Shephard, Neil},
     TITLE = {Likelihood inference for discretely observed nonlinear diffusions},
   JOURNAL = {Econometrica},
  FJOURNAL = {Econometrica. Journal of the Econometric Society},
    VOLUME = {69},
      YEAR = {2001},
    NUMBER = {4},
     PAGES = {959--993},
      ISSN = {0012-9682},
}

@article {RoSt01,
    AUTHOR = {Roberts, G. O. and Stramer, O.},
     TITLE = {On inference for partially observed nonlinear diffusion models
              using the {M}etropolis-{H}astings algorithm},
   JOURNAL = {Biometrika},
  FJOURNAL = {Biometrika},
    VOLUME = {88},
      YEAR = {2001},
    NUMBER = {3},
     PAGES = {603--621},
      ISSN = {0006-3444},
}

@article {BPRF06,
    AUTHOR = {Beskos, Alexandros and Papaspiliopoulos, Omiros and Roberts,
              Gareth O. and Fearnhead, Paul},
     TITLE = {Exact and computationally efficient likelihood-based
              estimation for discretely observed diffusion processes},
      NOTE = {With discussions and a reply by the authors},
   JOURNAL = {J. R. Stat. Soc. Ser. B Stat. Methodol.},
  FJOURNAL = {Journal of the Royal Statistical Society. Series B.
              Statistical Methodology},
    VOLUME = {68},
      YEAR = {2006},
    NUMBER = {3},
     PAGES = {333--382},
      ISSN = {1369-7412},
}

@article {Li13,
    AUTHOR = {Li, Chenxu},
     TITLE = {Maximum-likelihood estimation for diffusion processes via
              closed-form density expansions},
   JOURNAL = {Ann. Statist.},
  FJOURNAL = {The Annals of Statistics},
    VOLUME = {41},
      YEAR = {2013},
    NUMBER = {3},
     PAGES = {1350--1380},
      ISSN = {0090-5364},
}

@article {ChCh11,
    AUTHOR = {Chang, Jinyuan and Chen, Song Xi},
     TITLE = {On the approximate maximum likelihood estimation for diffusion
              processes},
   JOURNAL = {Ann. Statist.},
  FJOURNAL = {The Annals of Statistics},
    VOLUME = {39},
      YEAR = {2011},
    NUMBER = {6},
     PAGES = {2820--2851},
      ISSN = {0090-5364,2168-8966},
       DOI = {10.1214/11-AOS922},
}

@article {Saha02,
    AUTHOR = {A\"{\i}t-Sahalia, Yacine},
     TITLE = {Maximum likelihood estimation of discretely sampled
              diffusions: a closed-form approximation approach},
   JOURNAL = {Econometrica},
  FJOURNAL = {Econometrica. Journal of the Econometric Society},
    VOLUME = {70},
      YEAR = {2002},
    NUMBER = {1},
     PAGES = {223--262},
      ISSN = {0012-9682},
}

@article {Saha08,
    AUTHOR = {A\"it-Sahalia, Yacine},
     TITLE = {Closed-form likelihood expansions for multivariate diffusions},
   JOURNAL = {Ann. Statist.},
  FJOURNAL = {The Annals of Statistics},
    VOLUME = {36},
      YEAR = {2008},
    NUMBER = {2},
     PAGES = {906--937},
      ISSN = {0090-5364,2168-8966},
      DOI = {10.1214/009053607000000622},
}

@article {Yos92,
    AUTHOR = {Yoshida, Nakahiro},
     TITLE = {Estimation for diffusion processes from discrete observation},
   JOURNAL = {J. Multivariate Anal.},
  FJOURNAL = {Journal of Multivariate Analysis},
    VOLUME = {41},
      YEAR = {1992},
    NUMBER = {2},
     PAGES = {220--242},
      ISSN = {0047-259X,1095-7243},
      DOI = {10.1016/0047-259X(92)90068-Q},
}

@article {Kes97,
    AUTHOR = {Kessler, Mathieu},
     TITLE = {Estimation of an ergodic diffusion from discrete observations},
   JOURNAL = {Scand. J. Statist.},
  FJOURNAL = {Scandinavian Journal of Statistics. Theory and Applications},
    VOLUME = {24},
      YEAR = {1997},
    NUMBER = {2},
     PAGES = {211--229},
      ISSN = {0303-6898,1467-9469},
       DOI = {10.1111/1467-9469.00059},
}

@article {BeRoSt08,
    AUTHOR = {Beskos, Alexandros and Roberts, Gareth and Stuart, Andrew and
              Voss, Jochen},
     TITLE = {M{CMC} methods for diffusion bridges},
   JOURNAL = {Stoch. Dyn.},
  FJOURNAL = {Stochastics and Dynamics},
    VOLUME = {8},
      YEAR = {2008},
    NUMBER = {3},
     PAGES = {319--350},
      ISSN = {0219-4937,1793-6799},
}

@article {FePaRo08,
    AUTHOR = {Fearnhead, Paul and Papaspiliopoulos, Omiros and Roberts,
              Gareth O.},
     TITLE = {Particle filters for partially observed diffusions},
   JOURNAL = {J. R. Stat. Soc. Ser. B Stat. Methodol.},
  FJOURNAL = {Journal of the Royal Statistical Society. Series B.
              Statistical Methodology},
    VOLUME = {70},
      YEAR = {2008},
    NUMBER = {4},
     PAGES = {755--777},
         ISSN = {1369-7412},
}

@article {DeHu06,
    AUTHOR = {Delyon, Bernard and Hu, Ying},
     TITLE = {Simulation of conditioned diffusion and application to
              parameter estimation},
   JOURNAL = {Stochastic Process. Appl.},
  FJOURNAL = {Stochastic Processes and their Applications},
    VOLUME = {116},
      YEAR = {2006},
    NUMBER = {11},
     PAGES = {1660--1675},
      ISSN = {0304-4149,1879-209X},
}

@article {BlSo14,
    AUTHOR = {Bladt, Mogens and S\o rensen, Michael},
     TITLE = {Simple simulation of diffusion bridges with application to
              likelihood inference for diffusions},
   JOURNAL = {Bernoulli},
  FJOURNAL = {Bernoulli. Official Journal of the Bernoulli Society for
              Mathematical Statistics and Probability},
    VOLUME = {20},
      YEAR = {2014},
    NUMBER = {2},
     PAGES = {645--675},
      ISSN = {1350-7265,1573-9759},
}

@article {LiChMy10,
    AUTHOR = {Lin, Ming and Chen, Rong and Mykland, Per},
     TITLE = {On generating {M}onte {C}arlo samples of continuous diffusion
              bridges},
   JOURNAL = {J. Amer. Statist. Assoc.},
  FJOURNAL = {Journal of the American Statistical Association},
    VOLUME = {105},
      YEAR = {2010},
    NUMBER = {490},
     PAGES = {820--838},
      ISSN = {0162-1459,1537-274X},
}

@article {WGBS17-2,
    AUTHOR = {Whitaker, Gavin A. and Golightly, Andrew and Boys, Richard J.
              and Sherlock, Chris},
     TITLE = {Improved bridge constructs for stochastic differential
              equations},
   JOURNAL = {Stat. Comput.},
  FJOURNAL = {Statistics and Computing},
    VOLUME = {27},
      YEAR = {2017},
    NUMBER = {4},
     PAGES = {885--900},
      ISSN = {0960-3174,1573-1375},
}

@article{MPS25,
    AUTHOR = {Margossian, Charles C. and Pillaud-Vivien, Loucas and Saul, Lawrence K.},
     TITLE = {Variational inference for uncertainty quantification: an analysis of trade-offs},
   JOURNAL = {J. Mach. Learn. Res.},
  FJOURNAL = {Journal of Machine Learning Research (JMLR)},
    VOLUME = {26},
      YEAR = {2025},
     PAGES = {1--41},
}

@inproceedings{MMYG23,
    AUTHOR = {Modi, Chirag and Margossian, Charles C. and Yao, Yuling and Gower, Robert M. and Blei, David M. and Saul, Lawrence K.},
     TITLE = {Variational inference with {G}aussian score matching},
 BOOKTITLE = {Advances in Neural Information Processing Systems (NeurIPS) 36},
      YEAR = {2023},
     PAGES = {29935--29950},
}

@article{CaTaGa24,
  title={A survey on generative diffusion models},
  author={Cao, Hanqun and Tan, Cheng and Gao, Zhangyang and Xu, Yilun and Chen, Guangyong and Heng, Pheng-Ann and Li, Stan Z},
  journal={IEEE transactions on knowledge and data engineering},
  volume={36},
  number={7},
  pages={2814--2830},
  year={2024},
  publisher={IEEE}
}

@article{TzRa19-1,
  title={Neural stochastic differential equations: Deep latent gaussian models in the diffusion limit},
  author={Tzen, Belinda and Raginsky, Maxim},
  journal={arXiv preprint arXiv:1905.09883},
  year={2019}
}

@inproceedings{SoSoKi21,
  title={Score-Based Generative Modeling through Stochastic Differential Equations},
  author={Song, Yang and Sohl-Dickstein, Jascha and Kingma, Diederik P and Kumar, Abhishek and Ermon, Stefano and Poole, Ben},
  booktitle={International Conference on Learning Representations},
    year={2021},
}

@inproceedings{TzRa19-2,
  title={Theoretical guarantees for sampling and inference in generative models with latent diffusions},
  author={Tzen, Belinda and Raginsky, Maxim},
  booktitle={Conference on Learning Theory},
  pages={3084--3114},
  year={2019},
  organization={PMLR}
}

@article{RaPeKa19,
  title={Physics-informed neural networks: A deep learning framework for solving forward and inverse problems involving nonlinear partial differential equations},
  author={Raissi, Maziar and Perdikaris, Paris and Karniadakis, George E},
  journal={Journal of Computational physics},
  volume={378},
  pages={686--707},
  year={2019},
  publisher={Elsevier}
}

@article{ChYaDuKa21,
  title={Solving inverse stochastic problems from discrete particle observations using the fokker--planck equation and physics-informed neural networks},
  author={Chen, Xiaoli and Yang, Liu and Duan, Jinqiao and Karniadakis, George Em},
  journal={SIAM Journal on Scientific Computing},
  volume={43},
  number={3},
  pages={B811--B830},
  year={2021},
  publisher={SIAM}
}

%%%%%%%%%%%%%%%%%%%%%%%%%%%%%%%%%%%%%%%%%%%%%%%%%%%%%%%%%%%%
\newpage
\appendix

\section{Kolmogorov Forward and Backward Equations}
The generator $\cal{A}$ of a diffusion process $X(t)$ is given by
\begin{align}\label{eq:gen-X}
	\mathcal{A} f(x) = \sum_{i=1}^d b_i(x) \frac{\partial f(x)}{\partial x_i} + \frac12 \sum_{i,j=1}^d a_{ij}(x) \frac{\partial^2 f(x)}{\partial x_i \partial x_j}, \quad f \in C^2(\R^d, \R),
\end{align}
with $a(x) \dfeq \sigma(x)\sigma(x)^\top$. For notational convenience, we suppress the SDE parameter $\kappa$. 

	The dual $\mathcal{A}^*$ is given by
\begin{align}\label{eq:gen-dual-X}
	\mathcal{A}^* p(x) = - \sum_{i=1}^d \frac{\partial}{\partial x_i} \big( b_i(x) \, p(x) \big) + \frac12 \sum_{i,j=1}^d \frac{\partial^2}{\partial x_i \partial x_j} \big( a_{ij}(x) \, p(x) \big), \quad p \in C^2(\R^d, \R).
\end{align}

% Here, by a slight abuse of notation, we denoted the density of the probability measure $\initdist$ also by $\initdist$.	
	
{\em Kolmogorov equations:} The {\em Kolmogorov forward equation} or {\em Fokker--Planck equation} describes the evolution of the probability density $p(x,t)$ of $X(t)$, when $X(0) \sim \initdist$
	\[
	\partial_t p(x,t) = \mathcal{A}^* p(x,t) 
	= - \nabla_x \cdot \big( b(x) \, p(x,t) \big) + \frac12 \nabla_x \cdot \nabla_x \cdot \big( a(x) \, p(x,t) \big), \qquad p(x,0) = p_0(x)
	\]
The {\em Kolmogorov backward equation} describes the evolution of the conditional expectation
\[
u(x,t) := \mathbb{E}[f(X_T) \mid X_t = x]:
\]
and is given by
\begin{align}
    \label{eq:kol-back}
   	\partial_t u(x,t) + \mathcal{A} u(x,t) = 0, \quad u(x,T) = f(x). 
\end{align}
which, as the name suggests, needs to be solved backwards in time.	
	
{\em Equation for transition Densities}: Let $p(t,x,t',\cdot)$ denote the transition density of $X(t')$ given $X(t) = x$:
\[
 \mathbb{P}(X_{t'} \in A \mid X_t = x) =\int p(t,x,t',z)\, dz.
\]
Note that
\begin{itemize}
\item
 for any fixed $t,t'$ and $z$, $p(t,\cdot,t',z)$ is not necessarily a probability density function, i.e., $\displaystyle \int p(t,x,t',z)\ dx \neq 1;$

\item if $\initdist$ is the initial probability density of $X(0)$, then the density $p(\cdot,t)$ of $X(t)$ is of course given as
$p(x,t) = \int \initdist(y) p(0,y,t,x)\ dy.$  
\end{itemize}

Since the SDE considered in this note is time homogeneous, that is the driving functions $\drft(\cdot)$ and $\dffun(\cdot)$ do not depend on time variable $t$, $p(t,x,t',z)$ has the form $p(t,x,t',z) = p(t'-t, x, z)$

\np
It is easy to see from previous discussion that for any fixed $(t_0,x_0)$, $p(t_0,x_0, t,x)$ satisfies the Kolmogorov forward equation
\[\partial_t p(t_0,x_0, t,x) = \mathcal{A}^* p(t_0,x_0,\cdot,\cdot)(t,x), \quad p(t_0,x_0, t_0,x) = \delta(x-x_0), \]
and for any fixed $(T,x_T)$, $p(t,x,T,x_T)$ satisfies the Kolmogorov backward equation in $[0,T]$
\[
\partial_t p(t,x,T,x_T) + \mathcal{A}_x p(\cdot,\cdot,T,x_T)(t,x) = 0, \quad p(T,x,T,z) = \delta(x-z).
\]

% KEEP FULL APPENDIX CONTENT

\section{Auxiliary results and proofs}

\paragraph{Girsanov theorem and KL divergence between diffusion laws.}
Consider two Itô diffusions on $[0,T]$ with the same diffusion coefficient $\sigma(x)\in\mathbb{R}^{d\times d}$, which is assumed to be uniformly nondegenerate: 
\begin{align*}
dX_0(t) &= b_0(X_0(t))\,dt + \sigma(X_0(t))\,dW(t), \qquad X(0)\sim \mu_0, \\
dX_1(t) &= b_1(X_1(t))\,dt + \sigma(X(_1t))\,dW(t), \qquad X(0)\sim \mu_1.
\end{align*}
Let $\Pi_0$ and $\Pi_1$ denote the laws of $X_0$ and $X_1$ on $C([0,T];\mathbb{R}^d)$. Assume that $\Pi_0 \ll \Pi_1$, which holds under standard conditions (e.g., Novikov’s condition) when the diffusion coefficients coincide. Define the function
\[
u(x) = \sigma(x)^{-1}\big(b_1(x)-b_0(x)\big).
\]
Then, by Girsanov’s theorem, the Radon--Nikodym derivative of $\Pi_1$ with respect to $\Pi_0$ is given by
\begin{align*}
%\label{eq:girsanov-rn}
\frac{d\Pi_1}{d\Pi_0}(X)
=
\frac{d\mu_1}{d\mu_0}(X(0))
\exp\!\left(
\int_0^T u(X(t))^\top dW(t)
-
\frac{1}{2}\int_0^T \|u(X(t))\|^2 dt
\right),
\end{align*}
where $W(t)$ is the $\Pi_0$-Brownian motion driving the first SDE.

Taking expectation with respect to $\Pi_1$ and using the fact that
\[
\widetilde W(t) = W(t) - \int_0^t u(X(s))\,ds
\]
is a Brownian motion under $\Pi_1$, we obtain
\begin{align}
\mathrm{KL}(\Pi_1 \,\|\, \Pi_0)
&= \mathbb{E}_{\Pi_1}\!\left[\ln \frac{d\Pi_1}{d\Pi_0}\right] = \mathrm{KL}(\mu_1 \,\|\, \mu_0)
+ \frac{1}{2}\,\mathbb{E}_{\Pi_1}\!\left[
\int_0^T \|u(X_1(t))\|^2 dt
\right].
\label{eq:kl-sde}
\end{align}
In particular, when the initial distributions coincide, the KL divergence reduces to
\[
\mathrm{KL}(\Pi_1 \,\|\, \Pi_0)
=
\frac{1}{2}\,\mathbb{E}_{\Pi_1}\!\left[
\int_0^T 
\big(b_1(X_1(t)) - b_0(X_1(t))\big)^\top a^{-1}(X_1(t))\big(b_1(X_1(t)) - b_0(X_1(t))\big)
dt
\right],
\]
where $a=\s\s^\top$.
This representation highlights that finiteness of the KL divergence requires the two diffusions to share the same diffusion coefficient, and expresses the discrepancy between path measures purely in terms of the drift mismatch.

\begin{proof}[Proof of Theorem \ref{thm:elbo-lower-bound}]
    We have to work with KL-divergence of two SDEs in the path space. Notice that
   \begin{align*}
       \mrm{KL}(Q\| \Pre^{(\kappa)} ) =&\ \int_{C([0,T],\R^d)}\lf[ \ln \lf(\f{dQ}{d\Post^{(\kappa)}(\cdot \mid \by_{1:M_0})}\ri) + \ln \lf(\f{\Post^{(\kappa)}(\cdot \mid \by_{1:M_0})}{d \Pre^{(\kappa)}} \ri)\ri] dQ\\
       =&\ \mrm{KL}(Q\|\Post^{(\kappa)}(\cdot \mid \by_{1:M_0}) ) + \mathbb{E}_Q \!\left[
\sum_{m=1}^{M_0} \ln \obsden(y_m \mid X(t_m))
\right] - \ell(\kappa \mid \by_{1:M_0}),
   \end{align*} 
where for the last step we used \eqref{eq:prior-post-RN}. Rearranging shows for any $Q$ 
$$\ell(\kappa \mid \by_{1:M_0})  = \mrm{KL}(Q\|\Post^{(\kappa)}(\cdot \mid \by_{1:M_0}) )+  \mathrm{ELBO}(Q, \kappa \mid \by_{1:M_0}) \geq \mathrm{ELBO}(Q, \kappa \mid \by_{1:M_0}),$$
with the equality holding in the last step when $Q = \Post^{(\kappa)}(\cdot \mid \by_{1:M_0}).$ 
\end{proof}

Therefore, when $\widetilde Q = \mrm{Law}(\widetilde X)$, where $\widetilde X$ is defined in \eqref{eq:SDE-post-approx}, is used as a variational approximation to the smoothing distribution $\Post^{(\kappa)}(\cdot \mid \by_{1:M_0}),$ ELBO is given by
\begin{align}\label{eq:elbo-expression}
\mathrm{ELBO}(\widetilde Q, \kappa \mid \by_{1:M_0})
=
\mathbb{E}_{\widetilde Q} \bigg[
\sum_{m=1}^{M_0} \ln \obsden(y_m \mid \widetilde X(t_m))
-
\frac{1}{2} \int_0^T 
\widetilde s(t,\widetilde X(t))^\top 
a^{-1}(\widetilde X(t))
\widetilde s(t,\widetilde X(t))\,dt
\bigg].
\end{align}

\begin{proof}[Proof of Proposition \ref{prop: log-w-pde}] 
 First observe that 
\begin{align}\label{eq:h-time-deriv}
\partial_t h(\kappa,x,t) = \f{\partial_t w(\kappa, x,t)}{w(\kappa,x,t)} = - \f{ \cal{A}^{(\kappa)} w(\kappa, x,t)}{w(\kappa,x,t)}, \quad t\in(t_m,t_{m+1}].
\end{align}
Next, we have
\begin{align*}
 \cal{A}^{(\kappa)} h(\kappa, x,t) =&\ \f{ \cal{A}^{(\kappa)} w(\kappa, x,t)}{w(\kappa,x,t)} -\f{1}{2}\sum_{i,j}a_{ij}(x) \f{\partial_{x_i} w(\kappa,x,t)\partial_{x_j} w(\kappa,x,t)}{w^2(\kappa,x,t)}\\
=&\ \f{ \cal{A}^{(\kappa)} w(\kappa, x,t)}{w(\kappa,x,t)} - \f{1}{2}\sum_{i,j}a_{ij}(x)\partial_{x_i} h(\kappa, x,t)\partial_{x_j} h(\kappa, x,t)\\
=&\ \f{ \cal{A}^{(\kappa)} w(\kappa, x,t)}{w(\kappa,x,t)} - \f{1}{2}\nabla^T_xh(\kappa, x,t) a(x)\nabla_x h(\kappa, x,t).\end{align*}
It follows from \eqref{eq:h-time-deriv} that  for each $\kappa$, the function $h(\kappa, \cdot,\cdot)$ between observation times $t\in(t_{m-1},t_{m}]$ satisfies the backward PDE
\begin{align*}
% \label{eq:backwardPDE-log-post}
\partial_t h(\kappa, x,t) + \cal{A}^{(\kappa)} h(\kappa, x,t)+\f{1}{2}\nabla^T_xh(\kappa, x,t) a(x)\nabla_x h(\kappa, x,t)=0, \quad a=\s\s^\top
\end{align*}
with the boundary condition at $t_m$ given by
	\begin{equation*}%\label{eq:jump-obs-lnw}
		h(\kappa, x,t_m) = \ln \obsden(y_m\mid x) + h(\kappa, x,t_m^+).
	\end{equation*}
\end{proof}

\section{Numerical Verification against Exact PDE Solutions: 1D GBM \& 1D Double Well}
\label{app:gbm}

To validate the numerical accuracy of the proposed framework, we consider an one-dimensional Geometric Brownian Motion (GBM) and an one-dimensional Double Well equation, for each of which both the Kolmogorov equations and smoothing distributions are well understood. This example allows us to directly compare our proposed approximation with a ground-truth PDE-based solution.

\paragraph{Model.}
Both systems are governed by SDEs with drift $b(x;\theta)$ and isotropic diffusion $\sigma(x)=\sigma I$.

\textbf{1D Geometric Brownian Motion.}
\[
dX(t) = \theta X(t)\,dt + \sigma X(t)\,dW(t), \quad \sigma=0.3.
\]

% \paragraph{Background.}
% The Michaelis--Menten mechanism models the interaction between a substrate and an enzyme through the reactions:
% \[
% S + E \;\underset{k_{-1}}{\overset{k_1}{\rightleftharpoons}}\; C
% \;\xrightarrow{k_2}\; P + E,
% \]
% where $S$ is the substrate, $E$ the enzyme, $C$ the enzyme-substrate complex, and $P$ the product. The parameters $k_1$, $k_{-1}$, and $k_2$ represent reaction rates. Estimating these parameters from noisy observations is a fundamental problem in biochemical systems.

\textbf{1D Double Well System.}
\begin{align*}
dX(t) = \kappa (X(t) - X(t)^3)dt + \sigma dW(t), \quad \sigma=0.5.
\end{align*}

\paragraph{Setup.}
We compare two approaches:
\begin{itemize}
\item \textbf{Finite Difference Method (FDM):} A numerical solver for the Kolmogorov backward equation, which serves as a reference solution.
\item \textbf{Proposed method:} Our neural variational framework, which learns the log-likelihood function and corresponding score.
\end{itemize}

\paragraph{Data generation.}
In both cases, trajectories are simulated via Euler--Maruyama and subsampled at 5 time points:
\[
y_m = X(t_m) + \varepsilon_m, \quad \varepsilon_m \sim \mathcal{N}(0,0.2^2).
\]

\begin{center}
\begin{tabular}{c|c|c}
 & 1D GBM & 1D Double Well \\
\hline
Governing PDE Dimension & $2$ & $2$ \\
Time horizon & $T=0.5$ & $T=5$ \\
Step size & $0.001$ & $0.05$ \\
Initial state & $1.0$ & $3.0$ \\
Observation times & $0.1, 0.2, 0.3, 0.4, 0.5$ & $1.0, 2.0, 3.0, 4.0, 5.0$ \\
\end{tabular}
\end{center}

\paragraph{Comparison of smoothings.}
Figure~\ref{fig:1d_results} compares the function $w(x,t)$ obtained from the PDE solver and the proposed approximation at multiple time points. The solid curves correspond to the FDM solution, while dashed curves represent the neural approximation.

% \begin{figure}[H]
% \centering
% \includegraphics[width=0.5\linewidth]{comparison_result_KBE.png}
% \caption{Comparison of the solution $w(x,t)$ of the Kolmogorov backward equation for GBM. Solid lines correspond to the finite-difference (FDM) solution, while dashed lines correspond to the PINN approximation. The close agreement across time points demonstrates that the neural method accurately captures the underlying PDE solution.}
% \label{fig:gbm_density}
% \end{figure}

\begin{figure}[H]
    \centering
    \begin{subfigure}[c]{0.48\textwidth}
        \centering
        \includegraphics[width=\textwidth]{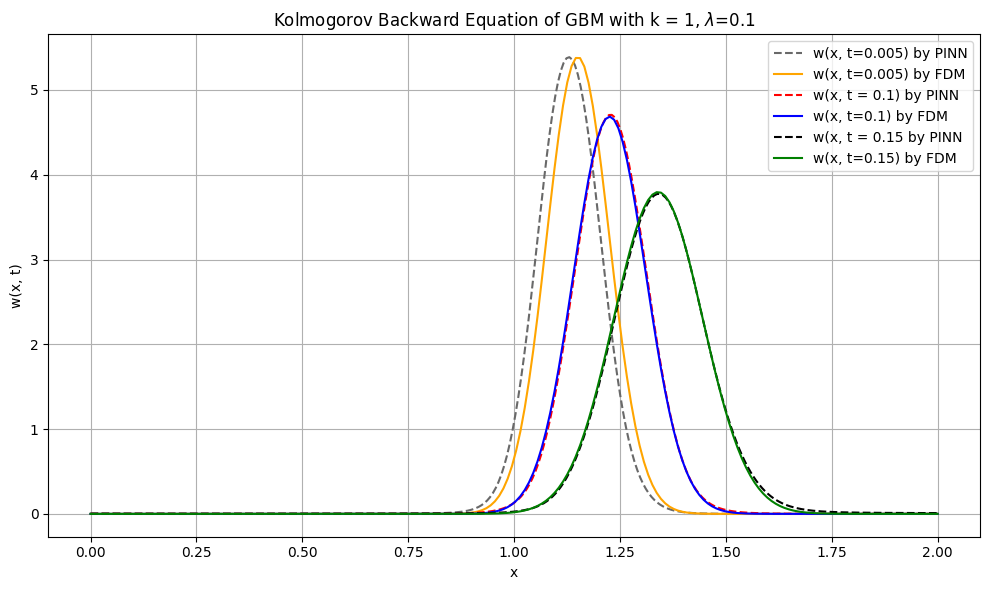}
        \caption{1D GBM}
    \end{subfigure}
    \hfill
    \begin{subfigure}[c]{0.48\textwidth}
        \centering
        \includegraphics[width=\textwidth]{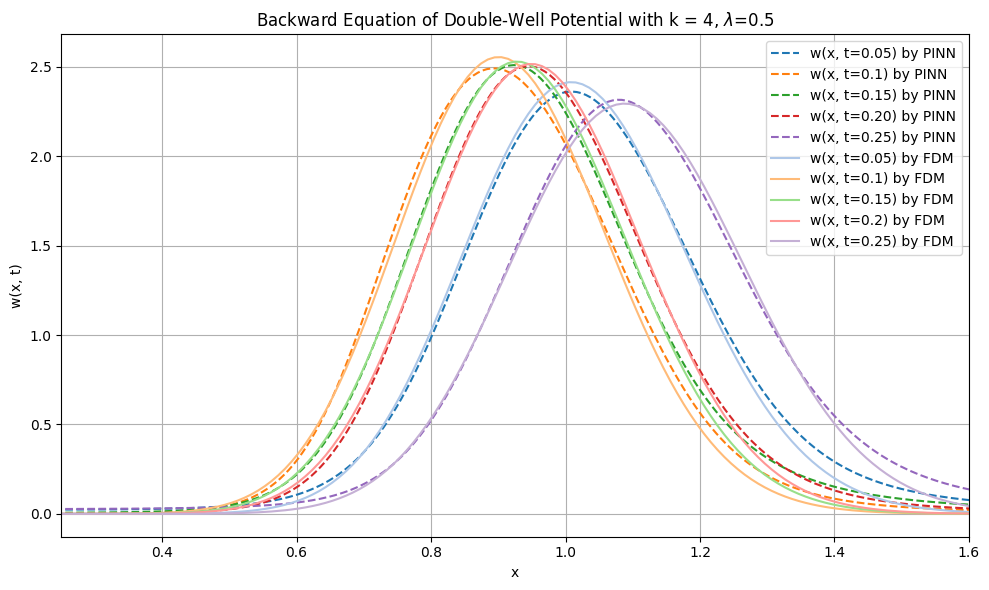}
        \caption{1D Double Well}
    \end{subfigure}
    \caption{Comparison of Smoothings}
    \label{fig:1d_results}
\end{figure}

\paragraph{Parameter inference: Proposed method.}
We next estimate the drift parameter $\kappa$ using the proposed variational method. Figure~\ref{fig:1d_inference} shows the evolution of the estimate over iterations. Starting from an initial value far from the ground truth, the estimate converges rapidly toward $\kappa = 1$.

% \begin{figure}[H]
% \centering
% \includegraphics[width=0.5\linewidth]{drift_estimate_GBM.png}
% \caption{Convergence of the drift parameter estimate $\hat{\kappa}$ using the proposed PINN-based method. The estimate converges rapidly toward the true value $\kappa = 1$.}
% \label{fig:gbm_pinn}
% \end{figure}

\begin{figure}[H]
    \centering
    \begin{subfigure}[c]{0.48\textwidth}
        \centering
        \includegraphics[width=\textwidth]{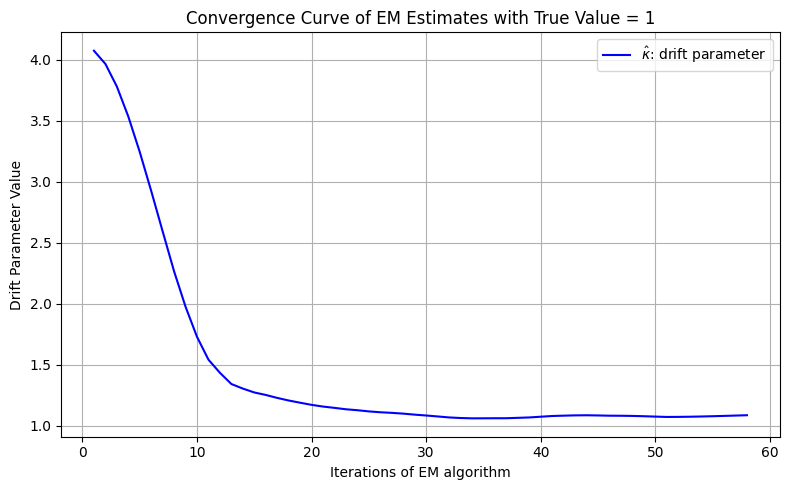}
        \caption{1D GBM with $\theta_{\text{true}}=1.0$}
    \end{subfigure}
    \hfill
    \begin{subfigure}[c]{0.48\textwidth}
        \centering
        \includegraphics[width=\textwidth]{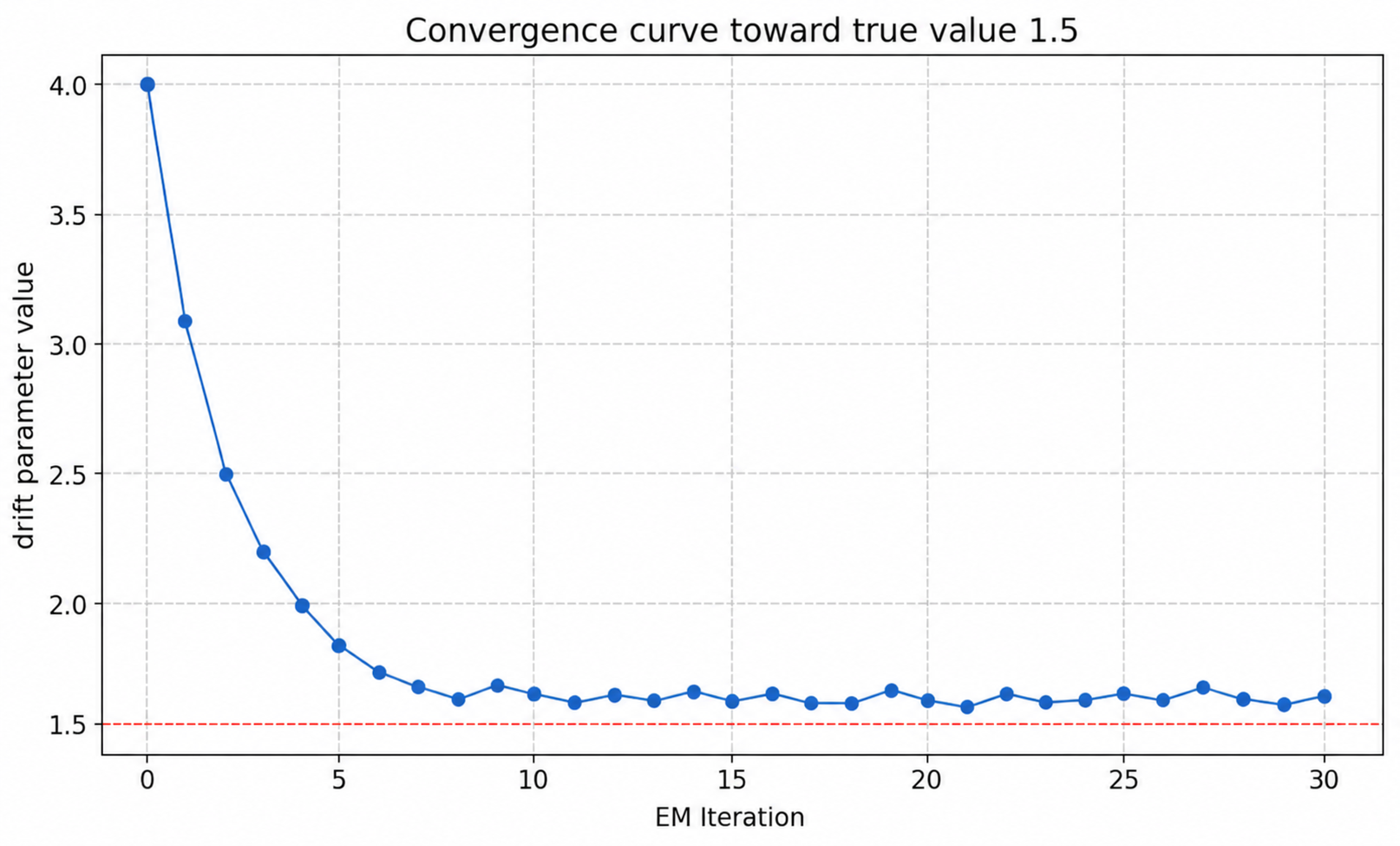}
        \caption{1D double well with $\kappa_{\text{true}}=1.5$}
    \end{subfigure}
    \caption{Parameter inferences via our proposed method.}
    \label{fig:1d_inference}
\end{figure}

\paragraph{Parameter inference: PDE-based EM baseline.}
For comparison, we also perform parameter estimation using a PDE-based EM approach. Figure~\ref{fig:1d_inference_pde} shows the corresponding convergence behavior.

% \begin{figure}[H]
% \centering
% \includegraphics[width=0.5\linewidth]{PDE_approach_EM_GBM.png}
% \caption{Convergence of the parameter estimate using the PDE-based EM approach. The estimate converges to the same value as the proposed method.}
% \label{fig:gbm_pde}
% \end{figure}

\begin{figure}[H]
    \centering
    \begin{subfigure}[c]{0.48\textwidth}
        \centering
        \includegraphics[width=\textwidth]{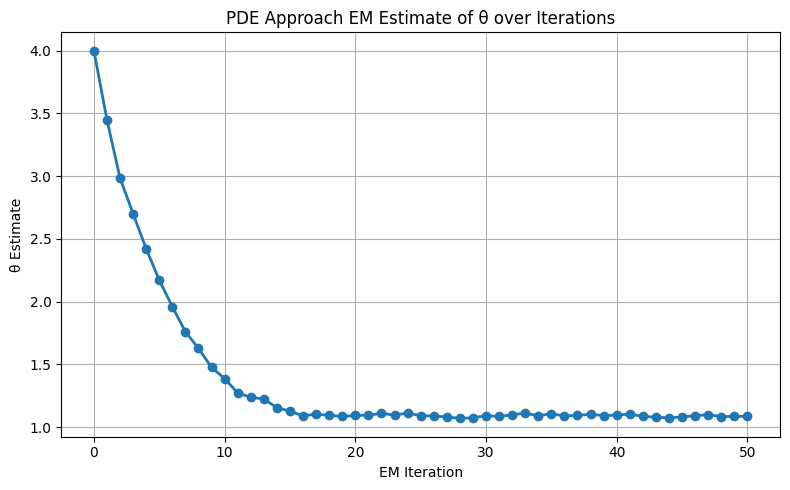}
        \caption{1D GBM with $\theta_{\text{true}}=1.0$}
    \end{subfigure}
    \hfill
    \begin{subfigure}[c]{0.48\textwidth}
        \centering
        \includegraphics[width=\textwidth]{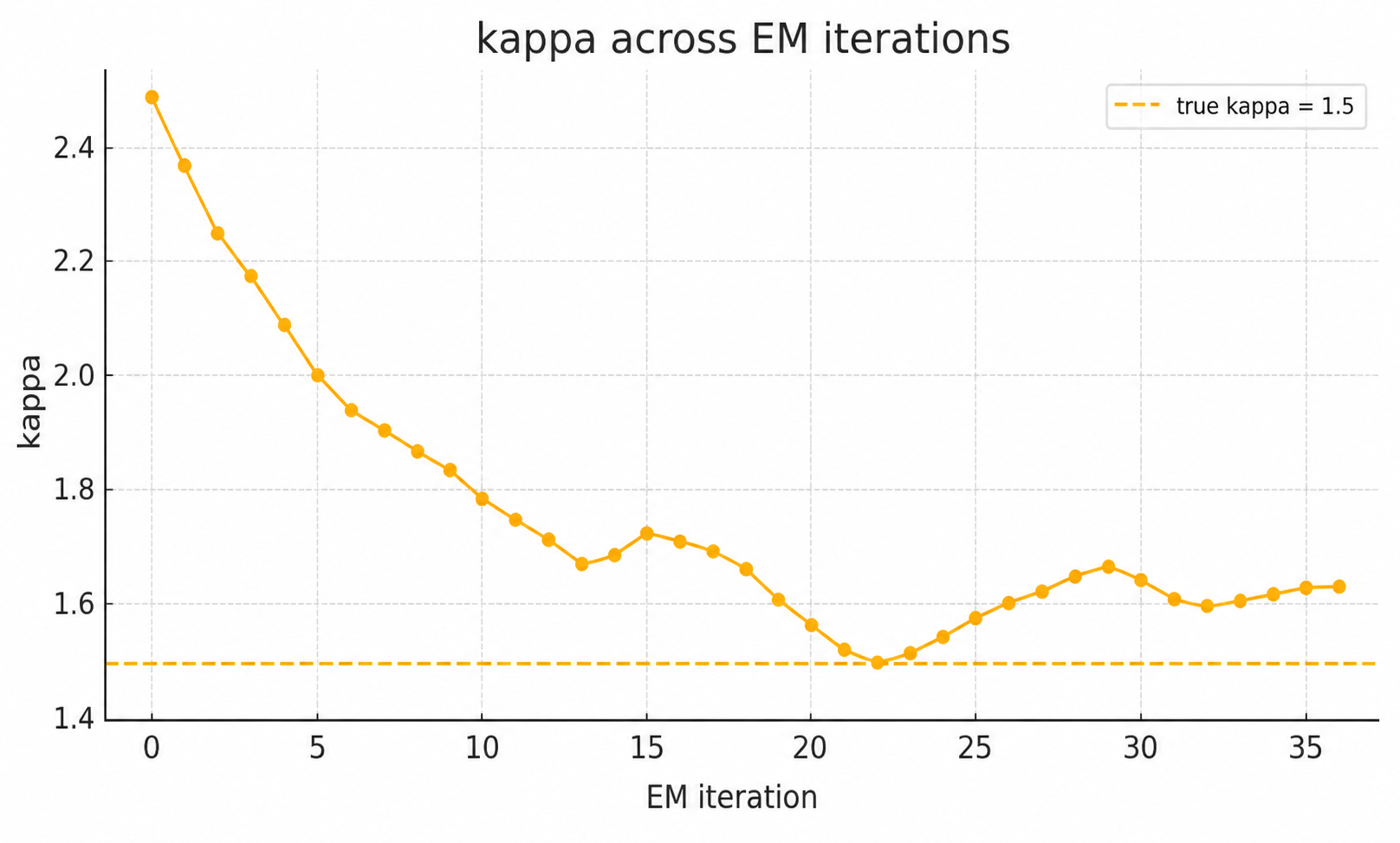}
        \caption{1D double well with $\kappa_{\text{true}}=1.5$}
    \end{subfigure}
    \caption{Parameter inferences via FDM-based method.}
    \label{fig:1d_inference_pde}
\end{figure}

\paragraph{Discussion.}
The results demonstrate excellent agreement between the neural approximation and the PDE-based solution. In particular:
\begin{itemize}
\item The Proposed method accurately reproduces the solution of the Kolmogorov backward equation.
\item The inferred parameter converges to the same value as the PDE-based EM method.
\item The convergence behavior is stable and robust despite nonlinear dependence on the parameter.
\end{itemize}

This example provides strong evidence that the proposed framework is capable of accurately approximating both the smoothing distribution and the underlying parameter, even without explicitly solving the governing PDE.

% \newpage
% \input{checklist.tex}
\end{document}